\definecolor{darkteal}{HTML}{1697B7}
\definecolor{lightteal}{HTML}{a2d0db}
\definecolor{darkred}{HTML}{C23000}
\definecolor{darkblue}{rgb}{0,0.08,0.45}
\definecolor{algann}{HTML}{0030C2}
\definecolor{Green}{rgb}{0.8509803921568627, 1.0, 0.8392156862745098}
\definecolor{Cyan}{rgb}{0.87, 0.96, 1}
\newcommand\tens[1]{\mathcal{#1}}
\newcommand{\convop}{\mathscr{C}}
\newcommand{\convopone}{\mathscr{T}}
\newcommand{\cin}{c_{\mathrm{in}}}
\newcommand{\cout}{c_{\mathrm{out}}}
\newcommand{\ch}{c}
\renewcommand{\dim}{d}
\newcommand{\vecs}{\texttt{vec}}
\newcommand{\Tmat}{{T}}
\newcommand{\stride}{s}
\DeclareMathSymbol{\shortminus}{\mathbin}{AMSa}{"39}
\newtheorem{theorem}{Theorem}
\newcolumntype{x}[1]{>{\centering\arraybackslash\hspace{0pt}}p{#1}}
\newcommand{\fixedvspace}[1]{%
  \par\kern-\prevdepth\vspace{#1}%
}
\newcommand{\minbox}[2]{\mathmakebox[\ifdim#1<\width\width\else#1\fi]{#2}}
\newcommand{\Let}[2]{\State $ \minbox{1em}{#1} \gets #2 $}
\algnewcommand{\Local}{\State\textbf{local: }}
\algnewcommand{\CommentGray}[1]{\hfill\textcolor{gray}{$\triangleright$\,#1}}
\algrenewcommand\algorithmicindent{0.9em}
\algnewcommand\algorithmicparfor{\textbf{parfor}}
\algnewcommand\algorithmicpardo{\textbf{do}}
\algnewcommand\algorithmicendparfor{\textbf{end\ parfor}}
\newcommand\blfootnote[1]{%
  \begingroup
  \renewcommand\thefootnote{}\footnote{#1}%
  \addtocounter{footnote}{-1}%
  \endgroup
}
\title{Towards Practical Control of Singular Values of Convolutional Layers}
\author{%
   Alexandra Senderovich$^{*\dagger}$ \\
   HSE University
   \And
   Ekaterina Bulatova$^*$ \\
   HSE University
   \And
   Anton Obukhov \\
   ETH Zürich \\
   \And
   Maxim Rakhuba \\
   HSE University
}
\begin{document}

\maketitle

\begin{abstract}
In general, convolutional neural networks (CNNs) are easy to train, but their essential properties, such as generalization error and adversarial robustness, are hard to control.
Recent research demonstrated that singular values of convolutional layers significantly affect such elusive properties and offered several methods for controlling them.
Nevertheless, these methods present an intractable computational challenge or resort to coarse approximations. 
In this paper, we offer a principled approach to alleviating constraints of the prior art at the expense of an insignificant reduction in layer expressivity.
Our method is based on the tensor-train decomposition; it retains control over the actual singular values of convolutional mappings while providing structurally sparse and hardware-friendly representation.
We demonstrate the improved properties of modern CNNs with our method and analyze its impact on the model performance, calibration, and adversarial robustness.
The source code is available at:
{
\href{https://github.com/WhiteTeaDragon/practical_svd_conv}{
https://github.com/WhiteTeaDragon/practical\_svd\_conv
}
}
\end{abstract}

\section{Introduction}

\blfootnote{$^*$Equal contribution.}
\blfootnote{$^\dagger$Corresponding author: Alexandra Senderovich (\href{mailto:AlexandraSenderovich@gmail.com}{AlexandraSenderovich@gmail.com})}%

Over the past decade, empirical advances in Deep Learning have made machine learning ubiquitous to researchers and practitioners from various fields of science and industry. 
However, the theory of Deep Learning lags behind its practical applications, resulting in unexpected outcomes or lack of explainability of the models. 
These adverse effects are becoming more prominent with many models put into customer-facing products, such as perception systems of self-driving cars, chatbots, and other applications. 
The demand for tighter control over deployed models has given rise to several subfields of Deep Learning, such as the study of adversarial robustness and model calibration, to name a few. 

Convolutional neural networks (CNNs) and, in particular, residual CNNs have become a benchmark for various computer vision tasks.
The key component of CNNs is convolutional layers, representing linear transformations that account for the image data structure.
The singular values of these linear transformations are key to the properties of the whole network, such as the Lipschitz constant and, hence, generalization error and robustness to adversarial examples.
Moreover, bounding the Lipschitz constant of convolutional layers can also improve training, as it prevents gradients from exploding.

Nevertheless, finding and controlling singular values of convolutional layers is challenging.
Indeed, computing and then clipping exact singular values of a layer~\citep{sedghi2019singular}, given by a kernel tensor of the size $k\times k \times \cin \times \cout$, has time and space complexities of $\mathcal{O}(n^2 \ch^2(\ch+\log n))$ and $\mathcal{O}(n^2{\ch}^2)$ respectively, where $n\times n$ is the input image size, $\cin$ and $\cout$ are the numbers of input and output channels respectively, $\ch=\max\{\cin,\cout\}$, and $k\times k$ is the filter size. 
Since typically $k^2 \ll \ch$, computing singular values requires many more operations than computing a single convolution with its asymptotic complexity $\mathcal{O}(n^2\ch^2 k^2)$.
When computing singular values, the storage of an arising kernel tensor of the size $n\times n \times \cin \times \cout$ (padded to image dimensions) can also be an issue for larger networks and inputs. 
This effect becomes even more pronounced in three-dimensional convolutions used for volumetric data processing.
Alternative methods that parametrize the convolutional layers are also computationally demanding~\citep{singla2021skew,BCOP}.

In this paper, we propose a practical approach to constraining the singular values  of the convolutional layers based on intractable techniques from the prior art.
Our approach relies on the assumption that modern over-parameterized neural networks can be made sparse using tensor decompositions, incurring insignificant degradation of the downstream task performance. We investigate the impact of using our approach on model performance, calibration, and adversarial robustness.
More specifically, our contributions are as follows:
\begin{enumerate}
    \item We propose a new framework for reducing the computational complexity of controlling singular values of a convolutional layer by imposing the Tensor-Train (TT) decomposition constraint on a kernel tensor.
    It allows for substituting the computation of singular values of the original layer by a smaller layer. As a result, it reduces the complexity of singular values control by using a method of choice.
    \item We extend the formula of exact singular values of a convolutional layer~\citep{sedghi2019singular} to the case of strided convolutions, which is ubiquitous in CNN architectures.
    \item
    We apply the proposed framework to different methods of controlling singular values and test them  on several CNN architectures. Contrary to measuring just the downstream task performance as in the prior art, we additionally demonstrate improvements in adversarial robustness and model calibration of the networks.
\end{enumerate}

The paper is organized as follows: Sec.~\ref{sec:relwork} provides an overview of prior art on singular values and Lipschitz constant estimation, related methods employing these techniques, and assumptions used to develop our approach; Sec.~\ref{sec:singularvalues} revisits computation of singular values of convolutional layers and describes our approach to dealing with computational complexity;
Sec.~\ref{sec:method} walks through the steps of our algorithm to computing singular values of compressed convolutional layers;
Sec.~\ref{sec:experiments} contains the empirical study of our method; 
Sec.~\ref{sec:conclusion} wraps up the paper.

\section{Related Work}
\label{sec:relwork}

\subsection{Computing Singular Values of Convolutions}
\label{sec:relwork:twoperspectives}
Traditional discrete convolutional layers used in signal processing and computer vision~\citep{lecun} can be seen from two perspectives.
On the one hand, a $d$-dimensional convolution linearly maps (potentially overlapping) windows of a $c$-channel input signal with side $k$ into vectors of $c$ output values (\textit{window map}).
This perspective is related to the one implementation of convolutional layers, decomposing the operation into window extraction termed \texttt{im2col}~\citep{chellapilla2006high}, followed by matrix multiplication via \texttt{GEMM}~\citep{blackford2002updated}.
From this perspective, the transformation is defined by the convolution weight tensor unfolded into a matrix of size $k^2\cin \times \cout$ (\texttt{Conv2D} case); 
its singular values can be computed and controlled through SVD-based schemes over the said unfolding matrix.
For example, \citet{yoshida2017spectral} proposed a method of Spectral Normalization to stabilize GAN training~\citep{gan}. 
Their method effectively estimates the largest singular value of the map via the power method. 
In a similar vein, \citet{liu2019spectral,obukhov2021spectral} explicitly parametrized the SVD decomposition of the mapping to control multiple singular values during training.

Nevertheless, the above perspective lacks the generality of treating the input signal as a whole, which is especially important when chaining maps, such as seen in deep CNNs.
To this end, a discrete convolutional layer can be viewed as a mapping from and onto the space defined by the signal shape (\textit{signal map}). 
{The map is defined by an $N$-ly block-circulant matrix, where $N$ is the number of spatial dimensions of the convolution.}
A straightforward way to compute singular values of such a matrix by applying a full SVD algorithm is prohibitively expensive, even for small numbers of channels, as well as signal and window sizes.
To overcome this issue, \citet{sedghi2019singular} proposed a method based on the Fast Fourier Transform (FFT) for computing exact singular values of convolutional layers that has a much better complexity than the naive approach. Nevertheless, it is still quite demanding in space and time complexities, as it deals with a padded kernel of the size $n\times n \times \ch \times \ch$. 
Alternatively, \citet{singla2021skew} directly impose nearly orthogonal constraints on convolutional layers using Taylor expansion of a matrix exponential of a skew-Hermitian matrix, which appears to be more efficient than a Block Convolution Orthogonal Parameterization approach of~\citet{BCOP} proposed earlier. 
\citet{singla2021fantastic} drew connections between both perspectives and proved that the Lipschitz constant of the window map could serve as a genuine upper bound of the Lipschitz constant of the signal map if multiplied by a certain constant.

\subsection{Effects of Singular Values}
The study of singular values and the closely related Lipschitzness of convolutional neural networks impacts many domains of Deep Learning research.
One essential expectation about neural networks is to generalize input data instead of memorizing it.
\citet{bartlett2017spectrallynormalized} present the generalization bound that depends on the product of the largest singular values of layers' Jacobians.
\citet{gouk2020regularisation} performed a detailed empirical study of the influence of bounding individual Lipschitz constants of each layer and its effect on the generalization error of the whole CNN.
Singular values of Jacobians greater or smaller than $1.0$ can also be responsible for the growth or decay of gradients.
Therefore, controlling all singular values can also be helpful to avoid exploding or vanishing gradients.
The decay of singular values in layers also plays a role in the performance of GAN generators~\citep{liu2019spectral}.

Apart from boosting the accuracy of predictions, researchers have been working on improving the robustness of neural networks to adversarial attacks, which is also affected by the Lipschitz constant~\citep{cisse2017parseval}.
Adversarial attacks aim to find a negligible (to human perception) perturbation of input data that would sabotage the predictions of a model.
In this paper, we use the AutoAttack Robust Accuracy module~\citep{autoattack} to evaluate results.
We additionally use a metric of classification model calibration (Expected Calibration Error, ECE)~\citep{ECEintro}.
Connections between model Lipschitz constraints, calibration, and out-of-distribution (OOD) model performance have been drawn in recent literature~\citep{postels21deu}.
Various training techniques can also affect model properties~\citep{vetrov2022} and efficiency of training \citep{ijcai2022p769}.

\subsection{Tensor Decompositions in Deep Learning}
Our proposed framework for CNN weights reparameterization relies on the well-studied property of neural network overparameterization~\citep{lotterytickethypothesis}.
Many network compression approaches~\citep{structuredpruning,lee2018snip} agree that it is often possible to approach the uncompressed model performance by imposing some form of sparsity on the network weights.
Tensor decompositions have been used previously for compression~\citep{NIPS2015_6855456e,DBLP:journals/corr/GaripovPNV16,DBLP:journals/corr/abs-1802-09052,obukhov2020tbasis}, multitask learning~\citep{KanakisBSGOG20}, neural rendering~\citep{obukhov2022ttnf} and reinforcement learning~\citep{ttopt2022}.
The idea of this paper is to utilize tensor decomposition closely related to the SVD decomposition and to take advantage of its sparsity to reduce the complexity of controlling singular values.
For that, we resort to the TT decomposition~\citep{oseledets2011tensor}, which in the outlined context is also equivalent to the Tucker-2 decomposition~\citep{tucker}.
As opposed to alternative tensor decompositions, this one allows us to naturally access singular values of the convolution layer with any method of choice.

\section{Method-independent Complexity Reduction}
\label{sec:singularvalues}

This section explains how to reduce the complexity of calculating the singular values of a convolutional layer (in the \textit{signal map} sense explained in Sec.~\ref{sec:relwork:twoperspectives}), regardless of the chosen method.
For the sake of generality, we consider layers acting on $(d{+}1)$-dimensional input arrays (tensors) with $\dim$ spatial dimensions and $1$ channel dimension.
Of particular interest are cases $\dim=2$ and $\dim=3$, corresponding to regular images and volumetric data.

We bootstrap our approach based on the recent research on neural network sparsity by compressing convolutional layers with the TT decomposition. 
We focus on dealing with the channel-wise complexity of the methods employed. 
In what follows,
Sec.~\ref{sec:convlayers} revisits the convolutional operator in neural networks; Sec.~\ref{sec:method_main} introduces our principled approach to compressing convolutional layers and reducing the complexity of the problem by a significant margin. 

\subsection{Regular Convolutional Layers}
\label{sec:convlayers}

To introduce a convolutional layer formally, let $\tens{K}\in\mathbb{R}^{k\times \dots \times k \times \cin \times \cout}$ be a $(\dim{+}2)$-dimensional kernel tensor, where $k$ is a filter size and $\cin,\cout$ are the numbers of input and output channels, respectively. 
A convolutional layer is given by a linear map $\convop_{\tens{K}}$ -- multichannel convolution with the kernel tensor $\tens{K}$ such that $\convop_{\tens{K}}\colon  \mathbb{R}^{\cin \times n\times \dots \times n} \to\mathbb{R}^{\cout \times f(n,k)\times \dots \times f(n,k)}$ and
\begin{equation}
\label{eq:assumption}
    \left(\convop_{\tens{K}}(\tens{X})\right)_{j,:,\dots,:} = \sum_{i=0}^{\cin-1} \tens{K}_{:,\dots,:,i,j} * \tens{X}_{i,:,\dots,:}, \quad\  j = 1,\dots,\cout,
\end{equation}%
where $f(n,k)$ is an integer, depending on the type of the convolution $'*'$ used and convolution parameters, e.g., strides.
The key assumption needed for our derivations is the bilinearity of $'*'$, which covers different convolution types, e.g., linear, periodic, and correlation.
For example, a widely used correlation-type convolution with strides equal to one, reads
\[
    \tens{Y}_{jq_1\dots q_\dim} = \sum_{i=0}^{\cin-1} \sum_{p_1,\dots,p_\dim=0}^{k-1} \,  \tens{K}_{p_1\dots p_\dim ij} \tens{X}_{i, q_1 + p_1,\dots, q_\dim  + p_\dim},
\]
for all $j=0,\dots, \cout-1$ and $q_\alpha = 0,\dots, f(n,k)-1$, $\alpha=1,\dots,\dim$, where $f(n,k) = n-k+1$.
In what follows in this section, we write
\begin{equation} \label{eq:conv_main}
    \tens{Y} = \convop_{\tens{K}}(\tens{X}),
\end{equation}
implying one of the convolution types mentioned above.

Using the linearity of~$\convop_{\tens{K}}$, we can rewrite~\eqref{eq:conv_main} as a matrix-vector product:
\begin{equation} \label{eq:conv_vec}
    \vecs (\tens{Y}) = \Tmat_{\tens{K}}\, \vecs (\tens{X}),
\end{equation}
where $\vecs$ is a row-major reshaping of a multidimensional array into a column vector. In turn, $\Tmat_{\tens{K}} \in\mathbb{R}^{\cout n^\dim \times \cin n^\dim}$ is a matrix with an additional block structure: its $n^\dim \times n^\dim$ blocks are $\dim$-level Toeplitz matrices (see~\citet{sedghi2019singular} for $\dim=2$).
Representation~\eqref{eq:conv_vec} allows us to replace singular values of a linear map $\convop_\tens{K}$ with singular values of a matrix~$\Tmat_{\tens{K}}$, which is handy for analysis.
In turn, access to singular values allows for controlling the Lipschitz constant of $\convop_{\tens{K}}$ in terms of the largest singular value of $T_{\tens{K}}$, denoted as $\sigma_1(T_{\tens{K}})$.
Indeed, since $\sigma_1(\Tmat_{\tens{K}}) = \|\Tmat_{\tens{K}}\|_2$, we have
\[
\begin{split}
\|\convop_{\tens{K}}(\tens{X}) - \convop_{\tens{K}}(\tens{Z})\|_F = \|\Tmat_{\tens{K}}(\vecs(\tens{X}) - \vecs(\tens{Z}))\|_2 \leq \|\Tmat_{\tens{K}}\|_2 \|\tens{X} - \tens{Z}\|_F =  \sigma_1(\Tmat_{\tens{K}}) \|\tens{X} - \tens{Z}\|_F.
\end{split}
\] 

\subsection{Compressed Convolutional Layers}
\label{sec:method_main}

Recall that even if $\dim=2$, finding the SVD of $\Tmat_{\tens{K}}$ requires $\mathcal{O}(n^2 (\ch^2 + \log n))$ FLOPs, which is too much for practical use with large networks.
To reduce the computational cost of SVD of $\Tmat_{\tens{K}}$, we propose a low-rank compressed layer representation based on the following tensor decomposition:
\begin{equation}\label{eq:tt}
    \tens{K}_{p_1 \dots p_\dim ij} = \sum_{\alpha = 0}^{r_1-1}\, \sum_{\beta = 0}^{r_2-1}   
    \tens{K}^{(1)}_{i\alpha}\ \tens{K}^{(2)}_{ p_1\dots p_\dim\alpha \beta}\ \tens{K}^{(3)}_{\beta j},
\end{equation}
where
$\tens{K}^{(1)}\in\mathbb{R}^{\cin\times r_1}$, $\tens{K}^{(3)}\in\mathbb{R}^{r_2\times \cout}$ and
$\tens{K}^{(2)}\in\mathbb{R}^{k \times \dots \times k \times r_1\times r_2}$ are some small tensors and integers $r_1,r_2$: $1\leq r_1 \leq \cin$, $1 \leq r_2 \leq \cout$ are called ranks.
This decomposition is essentially the TT decomposition~\citep{oseledets2011tensor} of the kernel tensor with filter modes $p_1,\dots,p_\dim$ merged into a single index.
We also note that it is equivalent to the so-called Tucker-2 decomposition, successfully used by~\citet{tucker} to compress large convolutional networks. The proposed decomposition is visualized in Fig.~\ref{fig:penrose}.
We note that we chose the TT decomposition as it gives us access to the SVD decomposition of the convolution (Theorem~\ref{thm:main}). 
It is, however, unclear how to obtain similar results with other decompositions, such as tensor-ring or canonical tensor decomposition, as they do not admit SVD-like form, see, e.g.,~\citep{grasedyck2013literature} for more details of these formats.

By substituting~\eqref{eq:tt} into~\eqref{eq:assumption}, using bilinearity of $'*'$ and omitting summation limits for clarity:
\[
\begin{split}
   &\left(\convop_{\tens{K}}(\tens{X})\right)_{j,:,\dots,:} =   \sum_i\tens{K}_{:,\dots,:,i,j} * \tens{X}_{i,:,\dots,:}
   =
     \sum_{i} \left(\sum_{\alpha,\beta}
    \tens{K}^{(1)}_{i\alpha}\ \tens{K}^{(2)}_{ :,\dots, :,\alpha, \beta}\ \tens{K}^{(3)}_{\beta j}\right)
    * \tens{X}_{i,:,\dots,:} = \\
    &
    \sum_{i}\sum_{\alpha,\beta}
    \tens{K}^{(1)}_{i\alpha}  \tens{K}^{(3)}_{\beta j} \left( \tens{K}^{(2)}_{ :,\dots, :,\alpha, \beta}
    * \tens{X}_{i,:,\dots,:}\right) =
     \sum_{\beta} 
     \tens{K}^{(3)}_{\beta j} \sum_\alpha  \tens{K}^{(2)}_{ :,\dots, :,\alpha, \beta}
    * \left( \sum_{i} \tens{K}^{(1)}_{i\alpha} \tens{X}_{i,:,\dots,:} \right),
\end{split}
\]
we obtain that the convolution with kernel given by~\eqref{eq:tt} is equivalent to a sequence of three convolutions without nonlinear activation functions between them:
\begin{enumerate}
    \item $1\times \dots \times 1$ convolution with $\cin$ input and $r_1$ output channels (kernel tensor
    -- $\tens{K}^{(1)}$);
    \item $k\times\dots \times k$ convolution with $r_1$ input and $r_2$ output channels (kernel tensor -- $\tens{K}^{(2)}$);
    \item $1\times\dots \times 1$ convolution with $r_2$ input and $\cout$ output channels (kernel tensor -- $\tens{K}^{(3)}$);
\end{enumerate}
or equivalently,
\begin{equation} 
\label{eq:comp}
    \convop_{\tens{K}} = \convopone_{\tens{K}^{(3)}} \circ \convop_{\tens{K}^{(2)}} \circ \convopone_{\tens{K}^{(1)}},
\end{equation}
where $\mathscr{A}\circ \mathscr{B}$ denotes the composition of two maps $\mathscr{A}$ and $\mathscr{B}$; $\convopone_{\tens{K}^{(i)}}$ denotes $1\times\dots\times 1$ convolution map given by a convolution kernel $\tens{K}^{(i)}$.

In the rest of this section, we show that singular values of the original convolutional layer with kernel $\tens{K}$ correspond to singular values of the compressed kernel $\tens{K}^{(2)}$.
As a first step, we demonstrate that we can impose orthogonality constraints on $\tens{K}^{(1)}$ and $\tens{K}^{(3)}$ without loss of expressivity.

\paragraph{Lemma \arabic{lemma}.}
\refstepcounter{lemma}\label{lemma:ortho}
\textit{
Let $\tens{K}\in\mathbb{R}^{k\times\dots\times k \times \cin \times \cout}$ be given by~\eqref{eq:tt}. Then there exist matrices ${\tens{Q}}^{(1)}\in\mathbb{R}^{\cin\times r_1}$, ${\tens{Q}}^{(3)}\in\mathbb{R}^{r_2\times \cout}$ satisfying 
  ${{\tens{Q}}^{(1)\top}}\tens{Q}^{(1)} = I_{r_1}$, ${\tens{Q}^{(3)}} {\tens{Q}^{(3)\top}} = I_{r_2}$
and a tensor ${\tens{Q}}^{(2)}\in\mathbb{R}^{k \times\dots\times k \times r_1\times r_2}$, such that
\begin{equation}\label{eq:ttqr}
    \tens{K}_{p_1 \dots p_\dim ij} = \sum_{\alpha = 0}^{r_1-1}\, \sum_{\beta = 0}^{r_2-1}   
    \tens{Q}^{(1)}_{i\alpha}\ \tens{Q}^{(2)}_{ p_1 \dots p_\dim \alpha \beta}\ \tens{Q}^{(3)}_{\beta j}.
\end{equation}
}
\begin{proof}
See Sec.~\ref{sec:appB1} of the Appendix.
\end{proof}

Now we have all preliminaries in place to formulate the key result that our framework is based on.

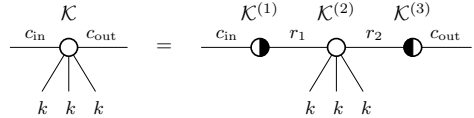
\begin{wrapfigure}{R}{0.45\textwidth}
  \vspace{-1.5em}
  \centering
  \resizebox{\linewidth}{!}{%
    \begin{tikzpicture}
    \newcommand{\rad}{0.15}
    \newcommand{\httsize}{1.3}
    \newcommand{\vttsize}{0.6}
    \pgfmathtruncatemacro{\step}{1}

	\node[label={[label distance=6pt]90:$\mathcal{K}$}] (p_left) at (-2.5*\httsize, 0) {};
	\path[-] (p_left) edge node[anchor=center, above] {\footnotesize $\cin$} ($(p_left) - (\httsize - 2 * \rad,0)$);
	\path[-] (p_left) edge node[anchor=center, above] {\footnotesize $\cout$} ($(p_left) + (\httsize - 2 * \rad,0)$);
	\path[-] (p_left) edge node[anchor=south, left=5pt, below=10pt] {\footnotesize $k$} ($(p_left) - (0.35 * \httsize, \step * \rad + \vttsize)$);
	\path[-] (p_left) edge node[anchor=south, right=1pt, below=10pt] {\footnotesize $k$} ($(p_left) - (0, \step * \rad + \vttsize)$);
	\path[-] (p_left) edge node[anchor=south, right=7pt, below=10pt] {\footnotesize $k$} ($(p_left) - (-0.35 * \httsize, \step * \rad + \vttsize)$);
	\draw[thick, fill=white] (p_left) circle (\rad);

	\node (p_equiv) at (-1.25*\httsize, 0) {$=$};

	\node[label={[label distance=6pt]90:$\mathcal{K}^{(1)}$}] (p_k1) at (0*\httsize, 0) {};
	\node[label={[label distance=6pt]90:$\mathcal{K}^{(2)}$}] (p_k2) at (1*\httsize, 0) {};
	\node[label={[label distance=6pt]90:$\mathcal{K}^{(3)}$}] (p_k3) at (2*\httsize, 0) {};
	\path[-] (p_k1) edge node[anchor=center, above] {\footnotesize $\cin$} ($(p_k1) - (\httsize - 2 * \rad,0)$);
	\path[-] (p_k1) edge node[anchor=center, above] {\footnotesize $r_1$} (p_k2);
	\path[-] (p_k2) edge node[anchor=center, above] {\footnotesize $r_2$} (p_k3);
	\path[-] (p_k3) edge node[anchor=center, above] {\footnotesize $\cout$} ($(p_k3) + (\httsize - 2 * \rad,0)$);

	\path[-] (p_k2) edge node[anchor=south, left=5pt, below=10pt] {\footnotesize $k$} ($(p_k2) - (0.35 * \httsize, \step * \rad + \vttsize)$);
	\path[-] (p_k2) edge node[anchor=south, right=1pt, below=10pt] {\footnotesize $k$} ($(p_k2) - (0, \step * \rad + \vttsize)$);
	\path[-] (p_k2) edge node[anchor=south, right=7pt, below=10pt] {\footnotesize $k$} ($(p_k2) - (-0.35 * \httsize, \step * \rad + \vttsize)$);

	\draw[thick, fill=white] (p_k1) circle (\rad);
    \draw[fill] ($(p_k1) + (-90:\rad)$) arc (-90:90:\rad) -- cycle ;

	\draw[thick, fill=white] (p_k2) circle (\rad);

	\draw[thick, fill=white] (p_k3) circle (\rad);
    \draw[fill] ($(p_k3) + (90:\rad)$) arc (90:270:\rad) -- cycle ;
\end{tikzpicture}
  }
  \caption{
    Tensor diagram representation of~\eqref{eq:tt} for $d=3$ with the constraints~\eqref{eq:framematr} as in Theorem~\ref{thm:main}. A node with $D$ ``legs'' represents a $D$-dimensional tensor; individual legs represent indices; connected legs represent summation over the respective indices; half-filled nodes are orthogonal matrices.
  }
  \label{fig:penrose}
  \vspace{-3em}
\end{wrapfigure}

\paragraph{Theorem \arabic{theorem}.}
\refstepcounter{theorem}\label{thm:main}
\textit{
Let $\tens{K}\in\mathbb{R}^{k\times\dots\times k \times \cin \times \cout}$ be given by~\eqref{eq:tt}. Let also ${\tens{K}}^{(1)}\in\mathbb{R}^{\cin\times r_1}$, ${\tens{K}}^{(3)}\in\mathbb{R}^{r_2\times \cout}$:
\begin{equation}\label{eq:framematr}
{{\tens{K}}^{(1)\top}} \tens{K}^{(1)} = I_{r_1}, \quad {\tens{K}^{(3)}} {\tens{K}^{(3)\top}} = I_{r_2}.
\end{equation}
Then the multiset of nonzero singular values of $\convop_{\tens{K}}$ defined in~\eqref{eq:assumption} equals the multiset of nonzero singular values of~$\convop_{\tens{K}^{(2)}}$.
}

\begin{proof}
See Sec.~\ref{sec:appB2} of the Appendix.
\end{proof}

Theorem~\ref{thm:main} suggests that after forcing the orthogonality of ${\tens{K}}^{(1)}$ and ${\tens{K}}^{(3)}$, one only needs to find singular values of the layer corresponding to the smaller kernel tensor ${\tens{K}}^{(2)}$ of the size $n\times n\times r_1 \times r_2$ instead of the original kernel tensor ${\tens{K}}$ of the size $n\times n \times r_1 \times r_2$.
At the same time, thanks to Lemma~\ref{lemma:ortho}, one may always set ${\tens{K}}^{(1)}$ and ${\tens{K}}^{(3)}$ to be orthogonal without additionally losing the expressivity of $\convop_{\tens{K}}$.
Fig.~\ref{fig:penrose} demonstrates the resulting decomposition using tensor diagram notation.

\section{Application of the Proposed Framework}
\label{sec:method}

This section discusses how to apply the proposed framework to various methods.
We start with the explicit formula for singular values computation and then discuss how to apply the framework when the convolutions are parametrized, for example, as in~\citep{singla2021skew,BCOP}.

\subsection{Explicit Formulas for Strided Convolutions}
\label{sec:statementtheorem2}

\citet{sedghi2019singular} consider periodic convolutions, allowing them to express singular values through discrete Fourier transforms and several SVDs.
We extend the main result of their work to the case of strided convolutions and summarize it in the following theorem.

\begin{theorem}
\label{thm:last_theorem}
Let $\widehat{\tens{K}} \in\mathbb{R}^{n\times n \times \cin \times \cout}$ be a kernel ${\tens{K}}\in\mathbb{R}^{k\times k \times \cin \times \cout}$ of a periodic convolution, padded with zeros along the filter modes.
Let $\stride$ be a stride of the convolution~$\convop_{\tens{K}}$: $n \equiv 0 \pmod \stride$ and
let the reshaped kernel $R \in \mathbb{R}^{\stride^2 \times \frac{n}{s} \times \frac{n}{s} \times \cin \times \cout}$ be such that:
\begin{equation} \label{eqn:reshape_for_fft}
    R_{q, a, b, i, j} = \widehat{\tens{K}}_{\lfloor q / s \rfloor + as, (q \hspace{-0.5em}\mod s) + bs, i, j}.
\end{equation}

Let us consider~{$\left( \frac{n}{\stride}\right)^2$} matrices $P^{(p_1, p_2)} \in \mathbb{R}^{\cin \times \stride^2 \cout}$ with entries: 
\begin{equation}
    \label{eqn:fft}
    P_{ij}^{(p_1, p_2)} = ( F^\top R_{j\hspace{-0.5em}\mod\stride^2, :, :, i, \lfloor j / \stride^2 \rfloor} F)_{p_1 p_2},
\end{equation}
where $F$ is an $n\times n$ Fourier matrix.
Then the multiset of singular values of $\Tmat_{{\tens{K}}}$ is as follows:
\begin{equation}
    \label{eqn:svd}
\sigma \left(\Tmat_{{\tens{K}}}\right) = \bigcup\limits_{p_1,p_2\in \{1,\dots,{\frac{n}{s}}\}}\sigma \left(P^{(p_1, p_2)}\right).
\end{equation}
\end{theorem}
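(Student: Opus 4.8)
The plan is to follow the strategy of \citet{sedghi2019singular} but adapted to the strided case via an initial reshaping that converts a strided convolution into a non-strided (standard periodic) convolution with more channels. First I would recall that a periodic convolution with stride $\stride$ along both spatial dimensions can be rewritten: downsampling the output by $\stride$ is the same as partitioning the spatial grid into $\stride\times\stride$ cosets, so the strided map $\Tmat_{\tens{K}}$ (acting on $n\times n$ inputs with $\cin$ channels, producing $\frac{n}{\stride}\times\frac{n}{\stride}$ outputs with $\cout$ channels) is, up to a permutation of rows and columns, block-equal to the matrix $\Tmat_{\widetilde{\tens{K}}}$ of a \emph{non-strided} periodic convolution on an $\frac{n}{\stride}\times\frac{n}{\stride}$ grid with $\stride^2\cin$ input channels and $\stride^2\cout$ output channels — except that here only the input side is genuinely blown up, and the reshaping in~\eqref{eqn:reshape_for_fft} encodes exactly which entries of $\widehat{\tens{K}}$ land in which channel-pair of this reduced kernel. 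The key identity to establish is that $R$ as defined is the kernel tensor of this equivalent non-strided convolution, so that $\sigma(\Tmat_{\tens{K}}) = \sigma(\Tmat_{\widetilde{\tens{K}}})$ as multisets (permutations are orthogonal and do not change singular values).

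Next I would apply the \citet{sedghi2019singular} diagonalization to the non-strided convolution with kernel $R$. A $2$-level periodic (block-circulant-with-circulant-blocks) matrix is block-diagonalized by the $2$D Fourier transform: conjugating each $n'\times n'$ Toeplitz-circulant block ($n'=\frac{n}{\stride}$) by $F$ (the $n'\times n'$ DFT matrix) turns $\Tmat_{\widetilde{\tens{K}}}$, after a symmetric permutation that groups together the $(p_1,p_2)$ frequency coordinates across channels, into a block-diagonal matrix with $(n')^2$ blocks, the $(p_1,p_2)$-th block being the $\cout' \times \cin'$ matrix whose entries are the $(p_1,p_2)$ Fourier coefficients of the corresponding channel-pair slices of the kernel. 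Spelling this out: block $(p_1,p_2)$ has entry indexed by output channel $c_{\mathrm{out}}'$ and input channel $c_{\mathrm{in}}'$ equal to $(F^\top R_{\cdot,:,:,c_{\mathrm{in}}',c_{\mathrm{out}}'} F)_{p_1p_2}$, and unpacking the composite channel indices $c_{\mathrm{in}}' = (q, i)$ with $q\in\{0,\dots,\stride^2-1\}$, $c_{\mathrm{out}}' = (\lfloor j/\stride^2\rfloor, \text{something})$ gives precisely the matrices $P^{(p_1,p_2)}$ of~\eqref{eqn:fft}. Since singular values of a block-diagonal matrix are the union (with multiplicity) of the singular values of the blocks, and the conjugation by the (unitary) Fourier matrix and the permutations preserve singular values, this yields~\eqref{eqn:svd}.

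Concretely the steps are: (1) write the strided periodic convolution explicitly in coordinates and verify that reindexing the spatial grid by $(\lfloor q/\stride\rfloor + a\stride,\ q \bmod \stride + b\stride) \leftrightarrow (q,a,b)$ turns it into an unstrided periodic convolution on the reduced grid with kernel $R$; (2) invoke (or re-derive) the fact that a $2$-level block-circulant matrix is unitarily block-diagonalized by $F\otimes F$ acting blockwise, reading off the blocks as $2$D Fourier coefficients of the kernel slices; (3) match the composite-channel bookkeeping so the blocks are exactly the $P^{(p_1,p_2)}$; (4) conclude by the multiplicativity of singular values under orthogonal/unitary transforms and block-diagonal structure. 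I expect the main obstacle to be step (1) combined with step (3): carefully tracking the index arithmetic so that the $\stride^2$ cosets of the input grid and the $\stride^2$-fold inflation of the output channel count are threaded consistently through the reshape $R$, the Fourier transform, and the final reassembly into $P^{(p_1,p_2)}$ — in particular making sure the $j \bmod \stride^2$ and $\lfloor j/\stride^2\rfloor$ splitting in~\eqref{eqn:fft} is the correct inverse of the coset partition, and that no stray factor of $\stride^2$ or off-by-one in the padding-to-$n$ versus reshaping-to-$\frac{n}{\stride}$ creeps in. The Fourier diagonalization itself is standard once the problem is reduced to the unstrided case, so the novelty and the risk both live in the reduction.
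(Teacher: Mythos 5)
Your plan follows essentially the same route as the paper's proof: interpret the strided periodic convolution matrix as a permuted version (via the $\stride\times\stride$ coset/polyphase decomposition of the spatial grid) of a non-strided periodic convolution on the $\frac{n}{\stride}\times\frac{n}{\stride}$ grid whose kernel is exactly $R$, then invoke the block-diagonalization of \citet{sedghi2019singular} and the invariance of singular values under permutations and unitary conjugation. The only wrinkle is your initial claim of ``$\stride^2\cin$ input and $\stride^2\cout$ output channels,'' which you immediately correct -- only one side is inflated by $\stride^2$ (appearing as $\stride^2\cout$ in the $P^{(p_1,p_2)}$ bookkeeping), matching the paper's reduced kernel of size $\frac{n}{\stride}\times\frac{n}{\stride}\times\cin\times\cout\stride^2$.
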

\begin{proof} 
See Sec.~\ref{sec:prooftheorem2} of the Appendix.
\end{proof}

The algorithm for finding the singular values of a TT-compressed convolutional layer based on Theorem~\ref{thm:last_theorem} is summarized in Alg.~\ref{alg:clipping}.
This algorithm consists of several major parts. We start with a layer with the imposed TT structure as in~\eqref{eq:tt}. We reduce this decomposition to the form with orthogonality conditions~\eqref{eq:framematr} by using the QR decomposition (steps 1--3). The second part is the application of Theorem~\ref{thm:last_theorem} to the smaller kernel tensor $\widehat{\tens{K}}^{(2)}$ (steps 4--7). For detailed pseudocode of applying Theorem~\ref{thm:last_theorem}, see Sec.~\ref{sec:codeclip}.

\begin{algorithm}[t!]
  \caption{
    Singular values of a TT-compressed periodic convolutional layer ($d=2$).
  }%
  \label{alg:clipping}%
\begin{algorithmic}%
  \Require{ \\
    $\tens{K}^{(1)}\in \mathbb{R}^{\cin \times r_1}$ -- left core tensor of the TT-compressed kernel~$\tens{K}$, \\
    $\tens{K}^{(2)} \in \mathbb{R}^{k \times k \times r_1 \times r_2}$ -- middle core tensor of the TT-compressed kernel~$\tens{K}$, \\
    $\tens{K}^{(3)} \in \mathbb{R}^{r_2 \times \cin}$ -- right core tensor of the TT-compressed kernel~$\tens{K}$, \\
    $n$ -- input image size, \\
    $\stride$ -- stride of the convolution defined by kernel $\tens{K}$ s.t. $n \equiv 0 \pmod \stride$.
  }
  \Ensure{ \\
    Non-zero singular values of the convolutional layer, defined by the kernel tensor $\tens{K}$.
  }
\end{algorithmic}%
\begin{algorithmic}[1]%
  \Let{
    Q^{(1)}, R^{(1)}
  }{
    \texttt{QR}\left(\tens{K}^{(1)}\right)
  }\CommentGray{
    Perform QR decomposition of the left core%
  }
  \Let{
    Q^{(3)}, R^{(3)}
  }{
    \texttt{QR}\left(\tens{K}^{(3)\top}\right)
  }\CommentGray{
    Perform QR decomposition of the right core%
  }
  \Let{
  \widehat{\tens{K}}^{(2)}_{p_1, p_2, :, :}
  }{
    R^{(1)}\tens{K}^{(2)}_{p_1, p_2, :, :} R^{(3)\top},\ \forall p_1, p_2
  }\CommentGray{
    Absorb non-orthogonal factors in the middle core
  }
  \Let{
    \widehat{\tens{K}}^{(2)}
  }{
    \texttt{pad\_zeros}\left(\widehat{\tens{K}}^{(2)},\ \ \texttt{shape=}(n \times n \times r_1 \times r_2)\right)
  }\CommentGray{
    Pad middle core to the image size%
  }
  \Let{
    R
  }{
    \texttt{reshape}\left(\widehat{\tens{K}}^{(2)},\ \ \texttt{shape=}~\eqref{eqn:reshape_for_fft}\right)
  }\CommentGray{
    Reshape middle core as in~\eqref{eqn:reshape_for_fft}. See also Sec.~\ref{sec:codeclip}.
  }
  \Let{
    P_{ij}^{(:,:)}
  }{
    \texttt{fft2}\left(R_{j\hspace{-0.3em}\mod\stride^2, :, :, i, \lfloor j / \stride^2 \rfloor}\right),\ \ \forall\ i,j
  }\CommentGray{
    Perform $2$-dimensional FFT of slices of $R$%
  }
  \State\Return{
    $\bigcup\limits_{p_1,p_2\in \{1,\dots,\frac{n}{s}\}}\sigma \left(P^{(p_1, p_2)}\right)$
  }\CommentGray{
    Return a union of all singular values of $P^{(\cdot,\cdot)}$
  }
\end{algorithmic}
\end{algorithm}

Computing singular values using~\eqref{eqn:fft} and~\eqref{eqn:svd}
has the complexity~$\mathcal{O}(n^2 \ch^2(\ch s^2 + \log \frac ns))$, $\ch = \max\{\cin,\cout\}$, which depends cubically on $\ch$.
Thus, reducing $c$ can significantly reduce the computational cost of finding the singular values.
Let us now use the TT-representation~\eqref{eq:tt} for kernel tensor with the ranks $r_1 = r_2 = r$.
Theorem~\ref{thm:main} suggests that we only need to apply Theorem~\ref{thm:last_theorem} to the core $\tens{K}^{(2)}\in\mathbb{R}^{k\times k \times r_1 \times r_2}$, which leads to the complexity $\mathcal{O}(n^2 r^2(r + \log \frac ns))$. For example, $r = \ch/2$ provides a theoretical speedup of up to $8$ times (see Fig.~\ref{fig:example} for practical evaluation).
Memory consumption can also be high with the method of~\citet{sedghi2019singular}. 
Processing the padded kernel tensor of the size $n\times n \times \ch \times \ch$ with our method requires $4$ times less memory for $r=\ch/2$ and $9$ times less for $r=\ch/3$ (see also Fig.~\ref{fig:example}).

\begin{figure}[ht]
 \centering
    \subfigure{{\includegraphics[width=0.46\textwidth]{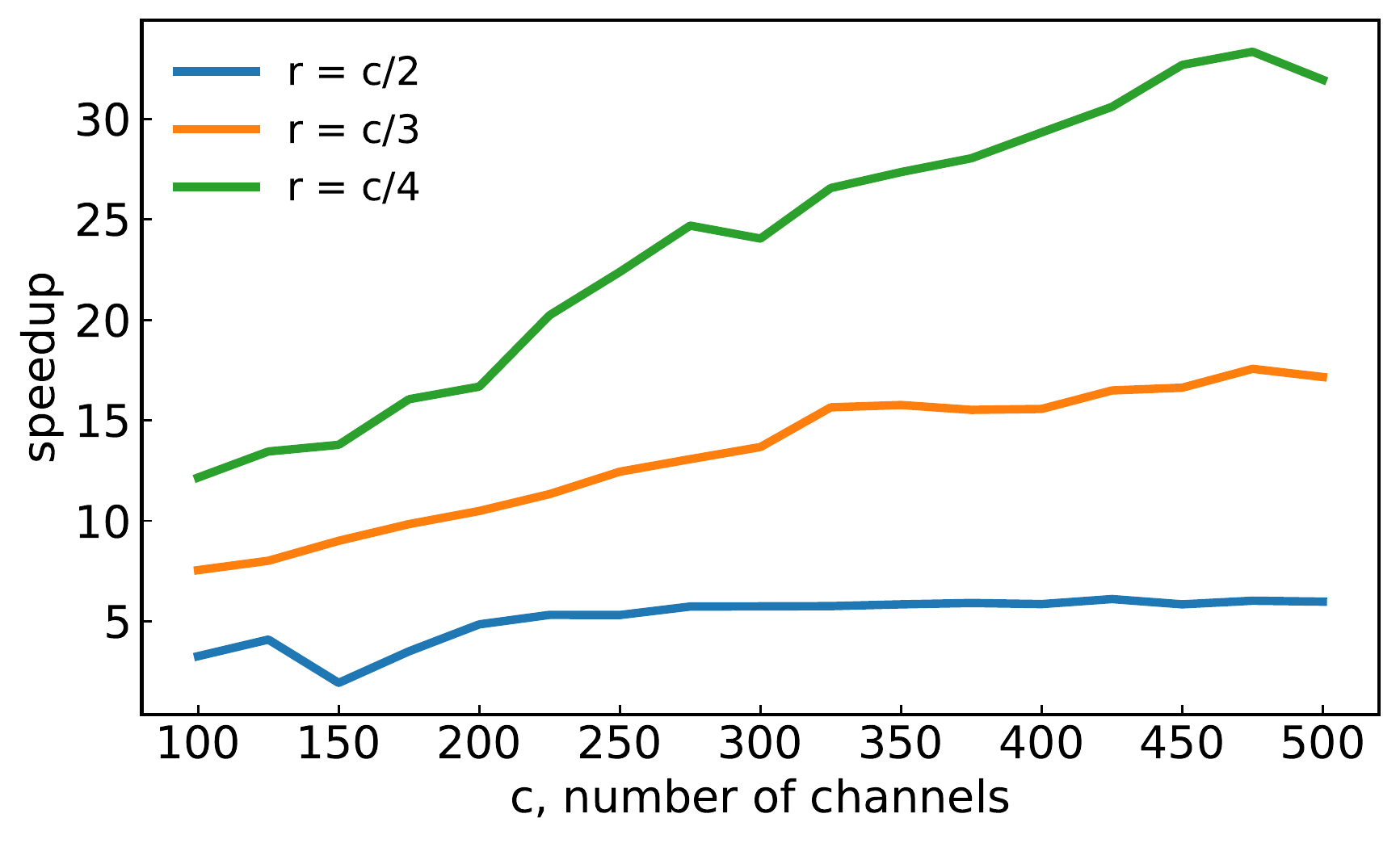} }}%
    \qquad
    \subfigure{{\includegraphics[width=0.46\textwidth]{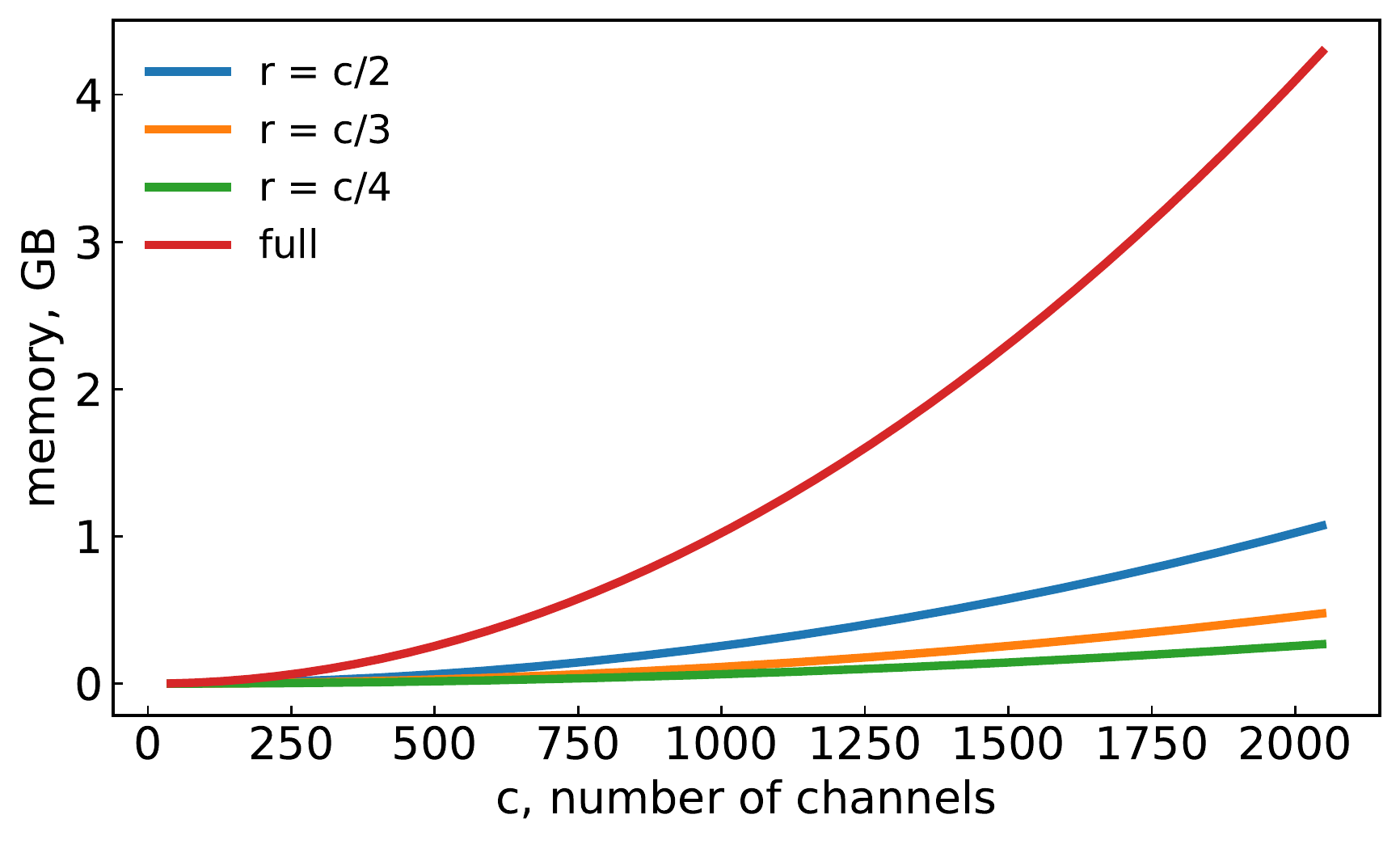} }}%
    \caption{
      Left: the speedup of computing singular values of a TT-compressed layer using Alg.~\ref{alg:clipping} relative to an uncompressed one. Right: Memory (single precision) to store the full $n\times n\times \ch \times \ch$ kernel and its TT-compressed form. Hyperparameter values: $n=16$ and $s=1$.
    }%
    \label{fig:example}%
\end{figure}

\subsection{Orthogonality Constraints} \label{sec:orthloss}

To ensure that the frame matrices $\tens{K}^{(1)}$ and $\tens{K}^{(3)}$ are orthogonal, we apply QR decomposition to the kernel in Alg.~\ref{alg:clipping}.
However, we can use other methods to maintain the orthogonality of frame matrices; for example, we can impose regularization on the layers.
Regularization is quite helpful in case $\tens{K}^{(2)}$ is not a standard convolution but has a special complex structure, such as the SOC layer from~\citep{singla2021skew}.
For that, let 
$n_l$ denote the number of TT-decomposed layers in a network, 
$\tens{K}^{(1)}_i$ and $\tens{K}^{(3)}_i$ -- left and right TT-cores of the $i^\mathrm{th}$ layer, and 
$r_{1,i}, r_{2,i}$ -- layer ranks.
Then, the orthogonal loss reads as:
\begin{equation}
\label{eq:orthloss}
    \text{loss}_\mathrm{ort} = 
    \left( 
    \sum\limits_{i=0}^{n_{l}} 
    \left\| \tens{K}_i^{(1)\top}  \tens{K}_i^{(1)} - I_{r_{1,i}} \right\|_F^2 + 
    \left\| \tens{K}_i^{(3)}  \tens{K}_i^{(3)\top} - I_{r_{2,i}} \right\|_F^2
    \right)
    \Bigg/
    \left(
    \sum\limits_{i=0}^{n_{l}} r_{1,i}^2 + r_{2,i}^2
    \right)
    .
\end{equation}

In the case of applying this regularization, we sum this loss with cross-entropy:
\begin{equation}\label{eq:lossortparam}
    \text{loss} = \text{loss}_\mathrm{CE} + \lambda_\mathrm{ort}\text{loss}_\mathrm{ort}.
\end{equation}
The value of $\text{loss}_\mathrm{ort}$ is the MSE measure of non-orthogonality of all $\tens{K}^{(1)}_i$ and $\tens{K}^{(3)}_i$, so the coefficient $\lambda_\mathrm{ort}$ tends to be large.
We set it to $1\text{e}5$ for experiments with WideResNets.

\section{Experiments}
\label{sec:experiments}

To study the effects of the proposed regularization on the performance of CNNs, we combined it with various methods of controlling singular values and applied it during training on the image classification task on the CIFAR-10 and CIFAR-100 datasets~\citep{krizhevsky2009learning}.
The code was implemented in PyTorch~\citep{pytorch} and executed on a NVIDIA Tesla V100, 32GB.
In our experiments, we used the LipConvNet architecture~\citep{singla2021skew}, a  WideResNet~\citep{Zagoruyko2016WideRN}, and VGG-19~\citep{vgg}.
Furthermore, the ranks are tested in the range $[c/4,c/2]$, where $c$ is the largest number of channels among all convolutions (except for $1\times 1$), which we found to give the better performance/speedup trade-off.

\paragraph{Evaluation Metrics}
We consider four main metrics: 
standard accuracy on the test split, 
accuracy on the CIFAR-C dataset (CC)~\citep{hendrycks2019robustness}\footnote{Distributed under Apache License 2.0}, 
accuracy after applying the AutoAttack module (AA)~\citep{autoattack}, 
and the Expected Calibration Error (ECE)~\citep{Guo2017OnCO}.
The last columns of tables with the results of our experiments contain compression ratios of all layers in the network.
In addition, we reported a dedicated metric for the LipConvNet architecture: p.r. -- provable bound-norm robustness, introduced by~\citet{BCOP}.
This metric shows the percentage of input images guaranteed to be predicted correctly despite any perturbation within radius $36/255$.

\subsection{Experiments on LipConvNet}
\label{sec:lipconvmain}
\begin{table*}[t!]
\caption{
Comparison between SOC~\citep{singla2021skew} and SOC combined with the proposed framework using LipConvNet-$N$ on CIFAR-10.
``Speedup'' is the speedup of training w.r.t. the SOC baseline.
``Comp.'' (compression) is the ratio between the number of parameters of convolutional layers in the original and the decomposed networks.
Despite speedups not being as substantial as in other experiments, we observe consistent improvement in all metrics, which is discussed in Fig.~\ref{fig:lip_ol_est}.
}
\label{tab:lc_sott_soc}
\vskip 0.15in
\begin{center}
\begin{small}
\begin{sc}
\begin{tabularx}{1.0\textwidth}{cc|ccccccc}
\toprule
rank & $N$ & Acc.~$\uparrow$ & AA~$\uparrow$ & CC~$\uparrow$ & ECE~$\downarrow$& p.r.~$\uparrow$ & speedup$\uparrow$& comp.$\uparrow$ \\
\midrule
-- & 5 & 75.6 ± 0.3 & 31.1 ± 0.2 & 67.3 ± 0.1 & 8.6 ± 0.4 & 59.1 ± 0.1 & 1.0 & 1.0 \\
 128 & 5 & 76.9 ± 0.2 & 32.8 ± 0.5 & 68.8 ± 0.2 & 6.7 ± 0.1 & 62.7 ± 0.1 & 1.4 & 2.9\\
 256 &\cellcolor{white}5 &\cellcolor{white}\textbf{78.3 ± 0.1} &\cellcolor{white}\textbf{34.7 ± 0.6} &\cellcolor{white}\textbf{69.7 ± 0.2} &\cellcolor{white}\textbf{5.3 ± 0.2} &\cellcolor{white}\textbf{65.4 ± 0.3} & 0.9 &\cellcolor{white}1.2\\
\midrule
-- & 20 & 76.3 ± 0.5 & 33.4 ± 0.5 & 68.1 ± 0.2 & 7.5 ± 0.1 & 61.4 ± 0.2 & 1.0 & 1.0\\
 128 & 20 & 76.8 ± 0.2 & 33.0 ± 0.1 & 68.3 ± 0.1 & 5.9 ± 0.2 & 62.4 ± 0.2 & 1.2 & 3.3 \\
 256 &\cellcolor{white}20 &\cellcolor{white}\textbf{78.4 ± 0.2} &\cellcolor{white}\textbf{35.4 ± 0.5} &\cellcolor{white}\textbf{70.4 ± 0.1} &\cellcolor{white}\textbf{4.7 ± 0.4} &\cellcolor{white}\textbf{65.6 ± 0.2} & 1.1 &\cellcolor{white}1.4 \\
\midrule
-- & 30 & 76.3 ± 1.0 & 32.0 ± 1.6 & 67.9 ± 1.0 & 7.3 ± 1.0 & 61.9 ± 1.2 & 1.0 & 1.0 \\
128 & 30 & 76.0 ± 0.1 & 32.6 ± 0.1 & 68.0 ± 0.1 & 5.0 ± 0.6 & 61.7 ± 0.6 & 1.3 & 3.5\\
\cellcolor{white}256 &\cellcolor{white}30 &\cellcolor{white}\textbf{77.9 ± 0.1}&\cellcolor{white}\textbf{34.7 ± 0.3}&\cellcolor{white}\textbf{69.7 ± 0.3}&\cellcolor{white}\textbf{4.8 ± 0.1}&\cellcolor{white}\textbf{64.8 ± 0.2}& 1.1 & \cellcolor{white}1.5\\
\bottomrule
\end{tabularx}
\end{sc}
\end{small}
\end{center}
\vskip -0.1in
\end{table*}

\citet{singla2021skew} introduced a new architecture called Lipschitz Convolutional Networks, or LipConvNet for short.
LipConvNet-$N$ consists of $N$ convolutional layers and activations.
This architecture is provably $1$-Lipschitz by contrast to popular residual network architectures.
In~\citet{singla2021skew}, the authors also propose an  orthogonal layer called Skew Orthogonal Convolution (SOC) and applied it in LipConvNet.

\begin{wrapfigure}{R}{0.45\textwidth}
  \centering\includegraphics[width=0.4\textwidth]{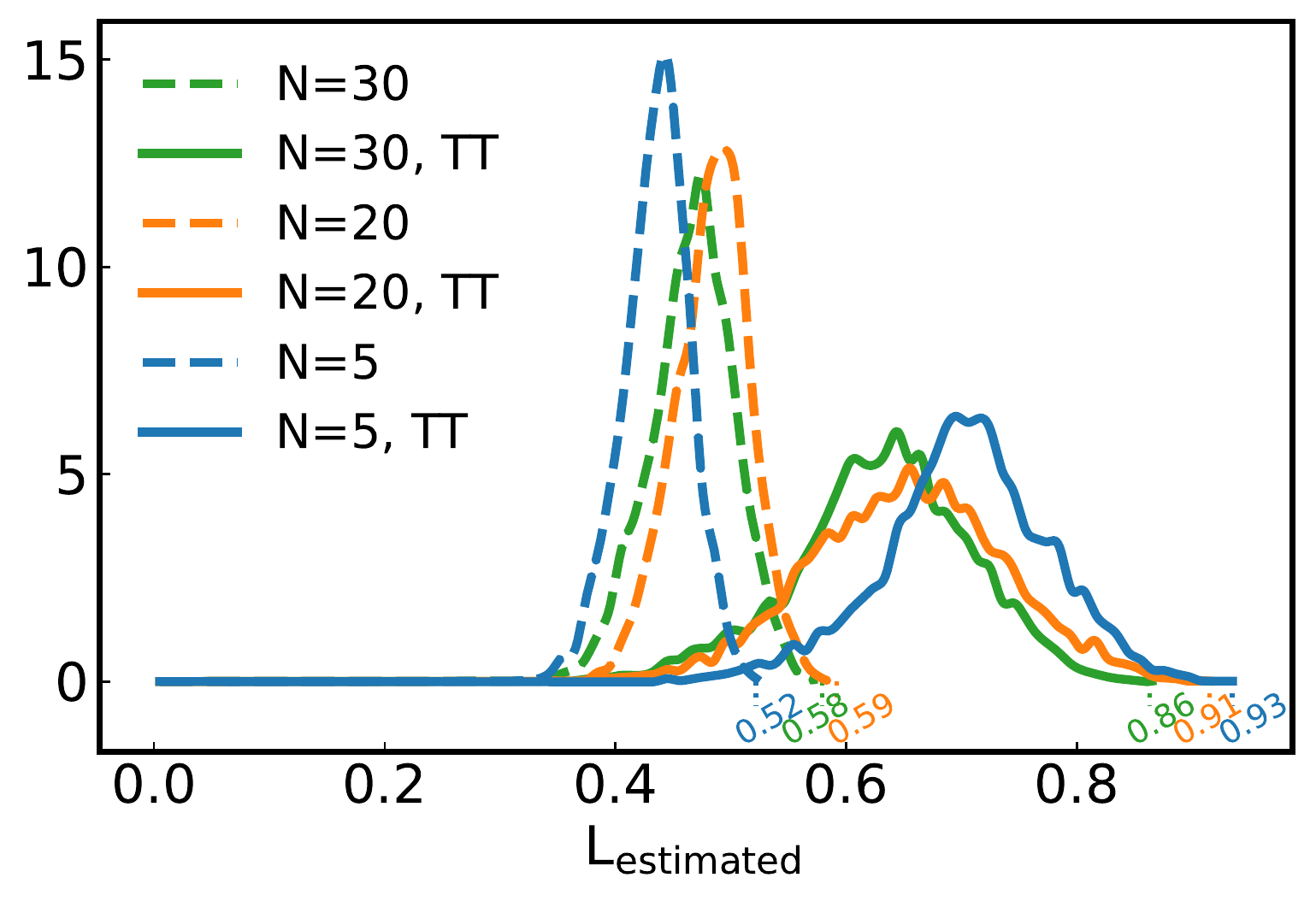}
  \caption{Histograms of empirically estimated Lipschitz constants of LipConvNet-$N$ architectures (see Sec.~\ref{sec:emlip}). When applying the TT method to LipConvNet architecture, the peaks are wider and closer to $1$, giving more flexibility to the model but still keeping constants below 1. Rank parameter: $r=256$.}
  \label{fig:lip_ol_est}
\end{wrapfigure}

We modify the SOC layer in correspondence with the proposed framework by adding a $1\times 1\times \cin \times r$ convolution before applying it, and finally applying the last convolution of size $1\times 1\times r\times \cout$, thus reducing the number of channels to $r\times r$ in the bottleneck.
To maintain the orthogonality of this new layer that we call SOC-TT, we add orthogonal loss (Sec.~\ref{sec:orthloss}) to keep these two frame matrices orthogonal.

Tab.~\ref{tab:lc_sott_soc} demonstrates metrics improvement after replacing SOC with SOC-TT.
For rank 256, we performed a grid search of the $\lambda_\mathrm{ort}$ coefficient.
The training protocol is the same as in~\citet{singla2021skew} except for the learning rate of LipConvNet-20 with the SOC-TT layer, which we set to 0.05 instead of~0.1.
To investigate the observed increase in metrics, we present histograms of empirically estimated Lischitz constants by using adversarial attacks in Fig.~\ref{fig:lip_ol_est} (see Sec.~\ref{sec:emlip}) with optimally chosen  $\lambda_\mathrm{ort}$ from \eqref{eq:lossortparam}. We observed that increasing $\lambda_\mathrm{ort}$ forces the frame matrices in the TT decomposition to be closer to orthogonal and, as a result, the histogram converges to that of the original SOC method.
At the same time, decreasing $\lambda_\mathrm{ort}$ may lead to networks with constants larger than $1$ but with better metrics.

\subsection{Experiments with WideResNet-16-10}
\label{sec:wrn2810}

Next, we consider WRN-16-10~\citep{Zagoruyko2016WideRN} with almost 17M parameters.
The results of our experiments on CIFAR-10 are summarised in Tab.~\ref{tab:wrn16}.
Additional experiments on CIFAR-100 are presented in Sec.~\ref{sec:appC_cifar100}.
The experimental setup and training schedule are taken from~\citet{gouk2020regularisation}.
We train our models with batches of 128 elements for 200 epochs, using SGD optimizer and a learning rate of 0.1, multiplied by 0.1 after 60, 120, and 160 epochs.
The hyperparameters for the experiments with division were taken from~\citet{gouk2020regularisation}.
The orthogonal loss~\eqref{eq:orthloss} was used in all experiments with decomposed networks.

\begin{table*}[t!]
\caption{
Performance metrics for different constraints applied to WRN-16-10 trained on CIFAR-10.
Both clipping and division combined with TT layers lead to an increase in metrics (except for ECE), with $256$  being the best overall rank.
Clipping tends to be a better option in terms of metrics but is substantially slower than the division approach.
Moreover, clipping the singular values to~2 results in better performance than clipping to~1.
``Speedup'' is a speedup of the overhead
resulting from singular values control
in all the layers in the network.
Clipping the whole network w/o decomposition takes 6.2 min while applying division -- 0.6 sec.
``Comp.'' (compression) is the ratio between the number of parameters of convolutional layers in the original ($\approx16.8$M) and decomposed networks.
}
\label{tab:wrn16}
\vskip 0.15in
\begin{center}
\begin{small}
\begin{sc}
\makebox[\textwidth][c]{
\begin{tabularx}{1.\textwidth}{cc|>{\centering\arraybackslash}Xc>{\centering\arraybackslash}X>{\centering\arraybackslash}Xcc}
\toprule
Method & Rank & Acc.~$\uparrow$ & AA~$\uparrow$ & CC~$\uparrow$ & ECE~$\downarrow$ & \multirow{1}{*}{Speedup~$\uparrow$} & \multirow{1}{*}{Comp.~$\uparrow$}\\ 
\midrule
\multirow{4}{*}{Baseline}& -- & \textbf{95.52} & 54.22 & 72.91 & \textbf{2.06} & -- & 1.0\\
& 192 & 95.21 & 54.97 & \textbf{73.94} & 2.84 & -- & 3.6\\
& 256 & 95.39 & \textbf{55.07} & 73.73 & 2.73 & -- & 2.4\\
& 320 & 95.02 & 52.32 & 72.86& 2.85 & -- & 1.8\\
\midrule
\multirow{4}{*}{Clip to 1} & -- & \textbf{95.27} & 53.27 & \textbf{73.44}& 2.45 & 1.0 & 1.0\\
& 192 & 95.25 & 51.7 & 73.37& 2.49 & 4.1 & 3.6\\
& 256 & 95.12 & 51.94 & 72.47 & \textbf{2.43} & 3.3 & 2.4\\
& 320 & 95.05 & \textbf{54.44} & 73.38 & 2.7 & 2.3 & 1.8\\
\midrule
\multirow{4}{*}{Clip to 2} & -- & \textbf{95.99} & 55.77 & 73.27 & \textbf{1.92} & 1.0 & 1.0\\
& 192 & 95.45 & 55.45 & 73.26 & 2.55 & 4.1 & 3.6\\
& 256 & 95.73 & \textbf{55.83} & 73.35 & 2.28 & 3.3 & 2.4\\
& 320 & 95.5 & 54.63 & \textbf{74.34} & 2.53 & 2.3 & 1.8\\
\midrule
\multirow{4}{*}{Division} & -- & 95.17 & 53.39 & 72.94 & 2.54 & 1.0 & 1.0 \\
& 192 & \textbf{95.71} & 53.4 & \textbf{74.68} & \textbf{2.4} & 2.3 & 3.6 \\
& 256 & 95.7 & \textbf{56.28} & 73.43 & 2.51 & 1.5 & 2.4 \\
& {320} & 94.93 & 52.35 & 70.89 & 2.97 & 1.3 & 1.8 \\
\bottomrule
\end{tabularx}
}
\end{sc}
\end{small}
\end{center}
\vskip -0.1in
\end{table*}

\paragraph{Clipping} Following~\citet{sedghi2019singular}, we apply the so-called clipping of singular values after computing the singular values of a layer using Theorem~\ref{thm:last_theorem}.
In particular, we first fix a threshold parameter $\delta$, chosen to be $1$ or $2$ in numerical experiments.
Then all singular values greater than $\delta$ are replaced with $\delta$.
This procedure allows for maintaining maximal singular values at the desired level.
After the clipping, we reconstruct the expanded kernel $\widehat{\tens{K}}_\delta$ of the size $n\times n \times \cin \times \cout$, which is no longer sparse.
Finally, we revert to its original shape by setting $\tens{K} = \widehat{\tens{K}}_\delta(1{:}k,1{:}k,:,:)$.
This operation is performed every 100 training iterations; we provide the code for computing singular values in Sec.~\ref{sec:codeclip} of the Appendix.
The results suggest that applying TT decomposition not only compresses the network by decreasing the number of parameters of its convolutional layers, but also speeds up the clipping operation.
Network robustness increased when training decomposed layers with clipping, which is confirmed by the results of the AutoAttack module.

\paragraph{Division}
Even though we can bound the Lipschitz constants of all convolutional layers, the Lipschitz constant of the whole network can be arbitrary.
In particular, for a WideResNet architecture, the Lipschitz constant would still be unconstrained due to residual connections, fully connected layers, batch normalization, and average pooling layers of the network.
\cite{gouk2020regularisation} derived a formula for the Lipschitz constants of batch normalization layers and proposed regularizing them too.
After each training step, they set the desired values for Lipschitz constants of convolutional, fully connected, and batch normalization layers.
To control the Lipschitz constant of convolutional and fully connected layers, they first compute the estimate of the largest singular value of a layer via power iteration.
Then they divide the layer by the estimate and multiply it by the desired Lipschitz constant.
A similar operation is performed for batch normalization layers, but the Lipschitz constant is computed directly.

In the original paper, this approach proved to be successful for WideResNet-16-10.
The method itself already increases the robust metrics and accuracy substantially.
The pattern remains intact after applying our decomposition to the layers: the robust accuracy of the AutoAttack module improves.
Overall, the TT decomposition, applied without any other regularization, performs poorly.
However, when used to facilitate the application of a particular method of control over the singular values, the decomposition only slightly decreases the accuracy, which might still exceed that of the baseline.

\subsection{Experiments with VGG-19}

\begin{figure}[t]
	\centering
	\includegraphics[width=1\textwidth]{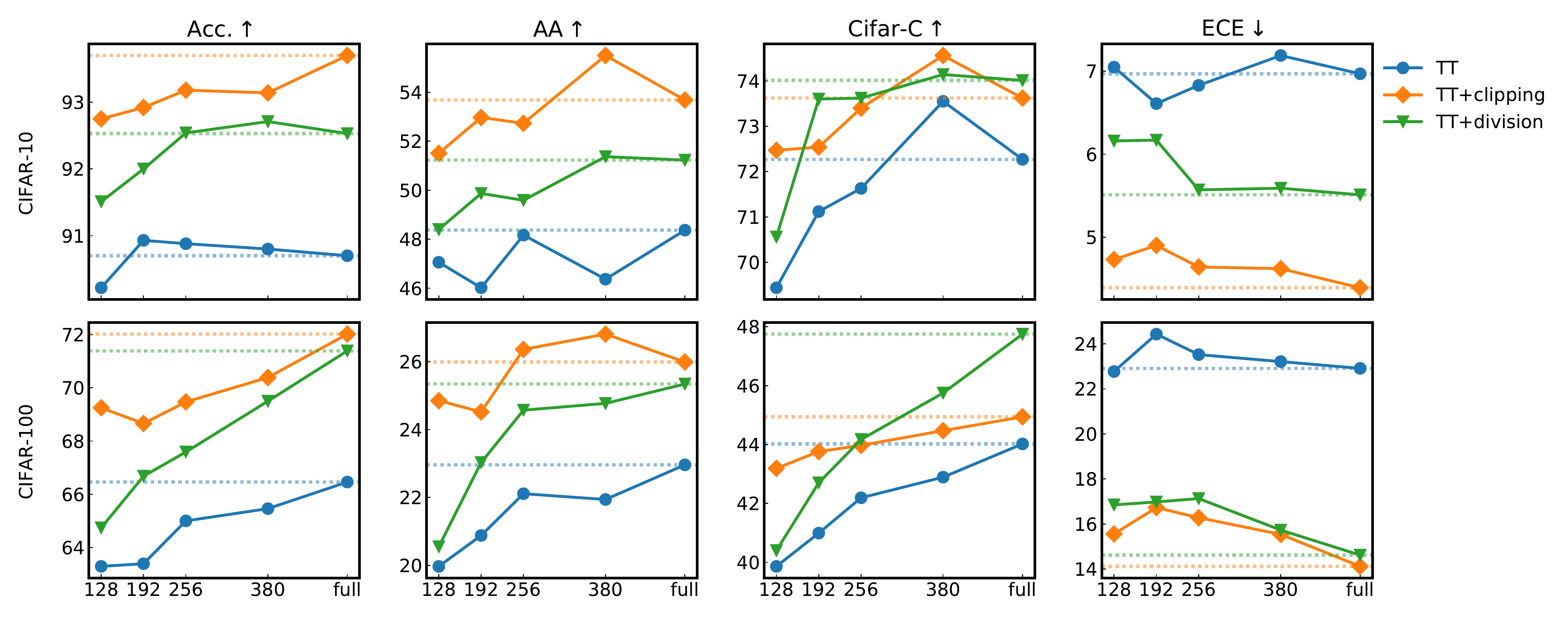}
	\vspace{-1.5em}
	\caption{Metrics for different ranks on CIFAR-10 and CIFAR-100; 
	``full'' stands for the uncompressed case, i.e., ``full'' label with TT method is simply the baseline with no TT decomposition and no control of singular values,
	similarly ``full'' TT+clipping is~\citep{sedghi2019singular} and 
	``full'' TT+division is~\citep{gouk2020regularisation}.
	Horizontal dotted lines additionally represent uncompressed baselines.
	}
	\label{fig:vgg_metrics}
\end{figure}
This section presents the results of controlling singular values on the VGG-19 network and two datasets: CIFAR-10 and CIFAR-100. As previously, we consider the clipping method with the exact computation of singular values~\citep{sedghi2019singular} and the application of one iteration of the power method~\citep{gouk2020regularisation}. The clipping parameter is set to $0.5$.
The results are presented in Fig.~\ref{fig:vgg_metrics}.
Firstly, we observe that all the considered robust metrics benefit from controlling singular values. 
This effect is  more pronounced than for the WRN-16-10 and was also observed in~\citep{gouk2020regularisation}.
We conjecture that this happens due to the lack of residual connections in VGG-type architectures, which makes them more sensitive to singular values of convolutional layers; in residual networks, convolutional layers are only corrections to the signal.
In almost all scenarios, we observed that the clipping method performed better than the division with the power method.

\section{Conclusion}
\label{sec:conclusion}
We proposed a principled sparsity-driven approach to controlling the singular values of convolutional layers.
We investigated two different families of CNNs and analyzed their performance and robustness under the proposed singular values constraint. 
The limitations of our approach are partly inherited from the encapsulated methods of computing singular values (e.g., the usage of periodic convolutions) and partly stem from the core assumption of CNN layer sparsity.
Overall, our approach has proven effective for large-scale networks with hundreds of input and output channels, making it stand out compared to intractable methods from the prior art. 
\paragraph{Societal Impact} As our work is concerned with improving the robustness of neural networks, it positively impacts their reliability. Nevertheless, due to the generality of our approach, it can be used in various application domains, including malicious purposes.

\section*{Acknowledgments}

The publication was supported by the grant for research centers in the field of AI provided by the Analytical Center for the Government of the Russian Federation (ACRF) in accordance with the agreement on the provision of subsidies (identifier of the agreement 000000D730321P5Q0002) and the agreement with HSE University \textnumero 70-2021-00139.
The calculations were performed in part through the computational resources of HPC facilities at HSE University~\citep{kostenetskiy2021hpc}.

{
\bibliography{paper}

\begin{thebibliography}{43}
\providecommand{\natexlab}[1]{#1}
\providecommand{\url}[1]{\texttt{#1}}
\expandafter\ifx\csname urlstyle\endcsname\relax
  \providecommand{\doi}[1]{doi: #1}\else
  \providecommand{\doi}{doi: \begingroup \urlstyle{rm}\Url}\fi

\bibitem[Anwar et~al.(2017)Anwar, Hwang, and Sung]{structuredpruning}
Sajid Anwar, Kyuyeon Hwang, and Wonyong Sung.
\newblock Structured pruning of deep convolutional neural networks.
\newblock \emph{J. Emerg. Technol. Comput. Syst.}, 13\penalty0 (3), feb 2017.
\newblock ISSN 1550-4832.
\newblock \doi{10.1145/3005348}.

\bibitem[Bartlett et~al.(2017)Bartlett, Foster, and
  Telgarsky]{bartlett2017spectrallynormalized}
Peter~L Bartlett, Dylan~J Foster, and Matus~J Telgarsky.
\newblock Spectrally-normalized margin bounds for neural networks.
\newblock \emph{Advances in neural information processing systems}, 30, 2017.

\bibitem[Blackford et~al.(2002)Blackford, Petitet, Pozo, Remington, Whaley,
  Demmel, Dongarra, Duff, Hammarling, Henry, et~al.]{blackford2002updated}
L~Susan Blackford, Antoine Petitet, Roldan Pozo, Karin Remington, R~Clint
  Whaley, James Demmel, Jack Dongarra, Iain Duff, Sven Hammarling, Greg Henry,
  et~al.
\newblock An updated set of basic linear algebra subprograms (blas).
\newblock \emph{ACM Transactions on Mathematical Software}, 28\penalty0
  (2):\penalty0 135--151, 2002.

\bibitem[Chellapilla et~al.(2006)Chellapilla, Puri, and
  Simard]{chellapilla2006high}
Kumar Chellapilla, Sidd Puri, and Patrice Simard.
\newblock High performance convolutional neural networks for document
  processing.
\newblock In \emph{Tenth international workshop on frontiers in handwriting
  recognition}. Suvisoft, 2006.

\bibitem[Cisse et~al.(2017)Cisse, Bojanowski, Grave, Dauphin, and
  Usunier]{cisse2017parseval}
Moustapha Cisse, Piotr Bojanowski, Edouard Grave, Yann Dauphin, and Nicolas
  Usunier.
\newblock Parseval networks: Improving robustness to adversarial examples.
\newblock In \emph{International Conference on Machine Learning}, pages
  854--863. PMLR, 2017.

\bibitem[Croce and Hein(2020)]{autoattack}
Francesco Croce and Matthias Hein.
\newblock Reliable evaluation of adversarial robustness with an ensemble of
  diverse parameter-free attacks.
\newblock In \emph{International conference on machine learning}, pages
  2206--2216. PMLR, 2020.

\bibitem[Frankle and Carbin(2019)]{lotterytickethypothesis}
Jonathan Frankle and Michael Carbin.
\newblock The lottery ticket hypothesis: Finding sparse, trainable neural
  networks.
\newblock In \emph{International Conference on Learning Representations}, 2019.

\bibitem[Garipov et~al.(2016)Garipov, Podoprikhin, Novikov, and
  Vetrov]{DBLP:journals/corr/GaripovPNV16}
Timur Garipov, Dmitry Podoprikhin, Alexander Novikov, and Dmitry~P. Vetrov.
\newblock Ultimate tensorization: compressing convolutional and {FC} layers
  alike.
\newblock \emph{CoRR}, abs/1611.03214, 2016.

\bibitem[Goodfellow et~al.(2020)Goodfellow, Pouget-Abadie, Mirza, Xu,
  Warde-Farley, Ozair, Courville, and Bengio]{gan}
Ian Goodfellow, Jean Pouget-Abadie, Mehdi Mirza, Bing Xu, David Warde-Farley,
  Sherjil Ozair, Aaron Courville, and Yoshua Bengio.
\newblock Generative adversarial networks.
\newblock \emph{Communications of the ACM}, 63\penalty0 (11):\penalty0
  139--144, 2020.

\bibitem[Goodfellow et~al.(2015)Goodfellow, Shlens, and
  Szegedy]{Goodfellow2015ExplainingAH}
Ian~J. Goodfellow, Jonathon Shlens, and Christian Szegedy.
\newblock Explaining and harnessing adversarial examples.
\newblock \emph{CoRR}, abs/1412.6572, 2015.

\bibitem[Gouk et~al.(2021)Gouk, Frank, Pfahringer, and
  Cree]{gouk2020regularisation}
Henry Gouk, Eibe Frank, Bernhard Pfahringer, and Michael~J Cree.
\newblock Regularisation of neural networks by enforcing lipschitz continuity.
\newblock \emph{Machine Learning}, 110\penalty0 (2):\penalty0 393--416, 2021.

\bibitem[Grasedyck et~al.(2013)Grasedyck, Kressner, and
  Tobler]{grasedyck2013literature}
Lars Grasedyck, Daniel Kressner, and Christine Tobler.
\newblock A literature survey of low-rank tensor approximation techniques.
\newblock \emph{GAMM-Mitteilungen}, 36\penalty0 (1):\penalty0 53--78, 2013.

\bibitem[Guo et~al.(2017)Guo, Pleiss, Sun, and Weinberger]{Guo2017OnCO}
Chuan Guo, Geoff Pleiss, Yu~Sun, and Kilian~Q Weinberger.
\newblock On calibration of modern neural networks.
\newblock In \emph{International conference on machine learning}, pages
  1321--1330. PMLR, 2017.

\bibitem[Gusak et~al.(2022)Gusak, Cherniuk, Shilova, Katrutsa, Bershatsky,
  Zhao, Eyraud-Dubois, Shliazhko, Dimitrov, Oseledets, and
  Beaumont]{ijcai2022p769}
Julia Gusak, Daria Cherniuk, Alena Shilova, Alexandr Katrutsa, Daniel
  Bershatsky, Xunyi Zhao, Lionel Eyraud-Dubois, Oleh Shliazhko, Denis Dimitrov,
  Ivan Oseledets, and Olivier Beaumont.
\newblock Survey on efficient training of large neural networks.
\newblock In \emph{Proceedings of the Thirty-First International Joint
  Conference on Artificial Intelligence, {IJCAI-22}}, pages 5494--5501, 7 2022.
\newblock \doi{10.24963/ijcai.2022/769}.

\bibitem[Hendrycks and Dietterich(2019)]{hendrycks2019robustness}
Dan Hendrycks and Thomas Dietterich.
\newblock Benchmarking neural network robustness to common corruptions and
  perturbations.
\newblock \emph{Proceedings of the International Conference on Learning
  Representations}, 2019.

\bibitem[Holtz et~al.(2012)Holtz, Rohwedder, and Schneider]{holtz2012manifolds}
Sebastian Holtz, Thorsten Rohwedder, and Reinhold Schneider.
\newblock On manifolds of tensors of fixed tt-rank.
\newblock \emph{Numerische Mathematik}, 120\penalty0 (4):\penalty0 701--731,
  2012.

\bibitem[Jain(1989)]{Jain}
A.~K. Jain.
\newblock \emph{Fundamentals of digital image processing}.
\newblock Englewood Cliffs, NJ: Prentice Hall, 1989.

\bibitem[Kanakis et~al.(2020)Kanakis, Bruggemann, Saha, Georgoulis, Obukhov,
  and Gool]{KanakisBSGOG20}
Menelaos Kanakis, David Bruggemann, Suman Saha, Stamatios Georgoulis, Anton
  Obukhov, and Luc~Van Gool.
\newblock Reparameterizing convolutions for incremental multi-task learning
  without task interference.
\newblock In \emph{Computer Vision - {ECCV} 2020}, volume 12365, pages
  689--707. Springer, 2020.

\bibitem[Kim et~al.(2016)Kim, Park, Yoo, Choi, Yang, and Shin]{tucker}
Yong-Deok Kim, Eunhyeok Park, Sungjoo Yoo, Taelim Choi, Lu~Yang, and Dongjun
  Shin.
\newblock Compression of deep convolutional neural networks for fast and low
  power mobile applications.
\newblock In \emph{4th International Conference on Learning Representations,
  {ICLR} 2016}, 05 2016.

\bibitem[Kodryan et~al.(2022)Kodryan, Lobacheva, Nakhodnov, and
  Vetrov]{vetrov2022}
Maxim Kodryan, Ekaterina Lobacheva, Maksim Nakhodnov, and Dmitry Vetrov.
\newblock Training scale-invariant neural networks on the sphere can happen in
  three regimes.
\newblock In \emph{Advances in Neural Information Processing Systems, 35},
  2022.

\bibitem[Kostenetskiy et~al.(2021)Kostenetskiy, Chulkevich, and
  Kozyrev]{kostenetskiy2021hpc}
PS~Kostenetskiy, RA~Chulkevich, and VI~Kozyrev.
\newblock Hpc resources of the higher school of economics.
\newblock In \emph{Journal of Physics: Conference Series}, volume 1740, page
  012050, 2021.

\bibitem[Krizhevsky et~al.(2009)Krizhevsky, Hinton,
  et~al.]{krizhevsky2009learning}
Alex Krizhevsky, Geoffrey Hinton, et~al.
\newblock Learning multiple layers of features from tiny images, 2009.

\bibitem[LeCun et~al.(1989)LeCun, Boser, Denker, Henderson, Howard, Hubbard,
  and Jackel]{lecun}
Y.~LeCun, B.~Boser, J.~S. Denker, D.~Henderson, R.~E. Howard, W.~Hubbard, and
  L.~D. Jackel.
\newblock Backpropagation applied to handwritten zip code recognition.
\newblock \emph{Neural Computation}, 1\penalty0 (4):\penalty0 541--551, 1989.

\bibitem[Lee et~al.(2019)Lee, Ajanthan, and Torr]{lee2018snip}
Namhoon Lee, Thalaiyasingam Ajanthan, and Philip Torr.
\newblock Snip: Single-shot network pruning based on connection sensitivity.
\newblock In \emph{International Conference on Learning Representations}, 2019.

\bibitem[Li et~al.(2019)Li, Haque, Anil, Lucas, Grosse, and Jacobsen]{BCOP}
Qiyang Li, Saminul Haque, Cem Anil, James Lucas, Roger~B Grosse, and
  J{\"o}rn-Henrik Jacobsen.
\newblock Preventing gradient attenuation in lipschitz constrained
  convolutional networks.
\newblock \emph{Advances in neural information processing systems}, 32, 2019.

\bibitem[Liu et~al.(2019)Liu, Tang, Zhou, and Qiu]{liu2019spectral}
Kanglin Liu, Wenming Tang, Fei Zhou, and Guoping Qiu.
\newblock Spectral regularization for combating mode collapse in gans.
\newblock In \emph{Proceedings of the IEEE/CVF International Conference on
  Computer Vision}, pages 6382--6390, 2019.

\bibitem[Naeini et~al.(2015)Naeini, Cooper, and Hauskrecht]{ECEintro}
Mahdi~Pakdaman Naeini, Gregory~F Cooper, and Milos Hauskrecht.
\newblock Obtaining well calibrated probabilities using bayesian binning.
\newblock In \emph{AAAI}, page 2901{\textendash}2907, 2015.

\bibitem[Novikov et~al.(2015)Novikov, Podoprikhin, Osokin, and
  Vetrov]{NIPS2015_6855456e}
Alexander Novikov, Dmitrii Podoprikhin, Anton Osokin, and Dmitry~P Vetrov.
\newblock Tensorizing neural networks.
\newblock In \emph{Advances in Neural Information Processing Systems},
  volume~28. Curran Associates, Inc., 2015.

\bibitem[Obukhov et~al.(2020)Obukhov, Rakhuba, Georgoulis, Kanakis, Dai, and
  Van~Gool]{obukhov2020tbasis}
Anton Obukhov, Maxim Rakhuba, Stamatios Georgoulis, Menelaos Kanakis, Dengxin
  Dai, and Luc Van~Gool.
\newblock T-basis: a compact representation for neural networks.
\newblock In \emph{Proceedings of the 37th International Conference on Machine
  Learning}, volume 119, pages 7392--7404. PMLR, 2020.

\bibitem[Obukhov et~al.(2021)Obukhov, Rakhuba, Liniger, Huang, Georgoulis, Dai,
  and Van~Gool]{obukhov2021spectral}
Anton Obukhov, Maxim Rakhuba, Alexander Liniger, Zhiwu Huang, Stamatios
  Georgoulis, Dengxin Dai, and Luc Van~Gool.
\newblock Spectral tensor train parameterization of deep learning layers.
\newblock In \emph{Proceedings of The 24th International Conference on
  Artificial Intelligence and Statistics}, volume 130, pages 3547--3555. PMLR,
  2021.

\bibitem[Obukhov et~al.(2022)Obukhov, Usvyatsov, Sakaridis, Schindler, and
  Van~Gool]{obukhov2022ttnf}
Anton Obukhov, Mikhail Usvyatsov, Christos Sakaridis, Konrad Schindler, and Luc
  Van~Gool.
\newblock Tt-nf: Tensor train neural fields, 2022.

\bibitem[Oseledets(2011)]{oseledets2011tensor}
Ivan~V Oseledets.
\newblock Tensor-train decomposition.
\newblock \emph{SIAM Journal on Scientific Computing}, 33\penalty0
  (5):\penalty0 2295--2317, 2011.

\bibitem[Paszke et~al.(2019)Paszke, Gross, Massa, Lerer, Bradbury, Chanan,
  Killeen, Lin, Gimelshein, Antiga, Desmaison, Kopf, Yang, DeVito, Raison,
  Tejani, Chilamkurthy, Steiner, Fang, Bai, and Chintala]{pytorch}
Adam Paszke, Sam Gross, Francisco Massa, Adam Lerer, James Bradbury, Gregory
  Chanan, Trevor Killeen, Zeming Lin, Natalia Gimelshein, Luca Antiga, Alban
  Desmaison, Andreas Kopf, Edward Yang, Zachary DeVito, Martin Raison, Alykhan
  Tejani, Sasank Chilamkurthy, Benoit Steiner, Lu~Fang, Junjie Bai, and Soumith
  Chintala.
\newblock Pytorch: An imperative style, high-performance deep learning library.
\newblock In H.~Wallach, H.~Larochelle, A.~Beygelzimer, F.~d\textquotesingle
  Alch\'{e}-Buc, E.~Fox, and R.~Garnett, editors, \emph{Advances in Neural
  Information Processing Systems 32}, pages 8024--8035. Curran Associates,
  Inc., 2019.

\bibitem[Postels et~al.(2022)Postels, Seg{\`u}, Sun, Sieber, Van~Gool, Yu, and
  Tombari]{postels21deu}
Janis Postels, Mattia Seg{\`u}, Tao Sun, Luca~Daniel Sieber, Luc Van~Gool,
  Fisher Yu, and Federico Tombari.
\newblock On the practicality of deterministic epistemic uncertainty.
\newblock In \emph{Proceedings of the 39th International Conference on Machine
  Learning}, volume 162, pages 17870--17909. PMLR, 2022.

\bibitem[Sanyal et~al.(2020)Sanyal, Torr, and Dokania]{Sanyal2020StableRN}
Amartya Sanyal, Philip~H. Torr, and Puneet~K. Dokania.
\newblock Stable rank normalization for improved generalization in neural
  networks and gans.
\newblock In \emph{International Conference on Learning Representations}, 2020.

\bibitem[Sedghi et~al.(2019)Sedghi, Gupta, and Long]{sedghi2019singular}
Hanie Sedghi, Vineet Gupta, and Philip~M. Long.
\newblock The singular values of convolutional layers.
\newblock In \emph{International Conference on Learning Representations}, 2019.

\bibitem[Simonyan and Zisserman(2015)]{vgg}
Karen Simonyan and Andrew Zisserman.
\newblock Very deep convolutional networks for large-scale image recognition.
\newblock In \emph{International Conference on Learning Representations}, 2015.

\bibitem[Singla and Feizi(2021{\natexlab{a}})]{singla2021fantastic}
Sahil Singla and Soheil Feizi.
\newblock Fantastic four: Differentiable and efficient bounds on singular
  values of convolution layers.
\newblock In \emph{International Conference on Learning Representations},
  2021{\natexlab{a}}.

\bibitem[Singla and Feizi(2021{\natexlab{b}})]{singla2021skew}
Sahil Singla and Soheil Feizi.
\newblock Skew orthogonal convolutions.
\newblock In \emph{International Conference on Machine Learning}, pages
  9756--9766. PMLR, 2021{\natexlab{b}}.

\bibitem[Sozykin et~al.(2022)Sozykin, Chertkov, Schutski, Phan, Cichocki, and
  Oseledets]{ttopt2022}
Konstantin Sozykin, Andrei Chertkov, Roman Schutski, Anh-Huy Phan, Andrzej
  Cichocki, and Ivan Oseledets.
\newblock {TTO}pt: A maximum volume quantized tensor train-based optimization
  and its application to reinforcement learning.
\newblock In \emph{Advances in Neural Information Processing Systems, 35},
  2022.

\bibitem[Wang et~al.(2018)Wang, Sun, Eriksson, Wang, and
  Aggarwal]{DBLP:journals/corr/abs-1802-09052}
Wenqi Wang, Yifan Sun, Brian Eriksson, Wenlin Wang, and Vaneet Aggarwal.
\newblock Wide compression: Tensor ring nets.
\newblock In \emph{Proceedings of the IEEE Conference on Computer Vision and
  Pattern Recognition}, pages 9329--9338, 2018.

\bibitem[Yoshida and Miyato(2017)]{yoshida2017spectral}
Yuichi Yoshida and Takeru Miyato.
\newblock Spectral norm regularization for improving the generalizability of
  deep learning, 2017.

\bibitem[Zagoruyko and Komodakis(2016)]{Zagoruyko2016WideRN}
Sergey Zagoruyko and Nikos Komodakis.
\newblock Wide residual networks.
\newblock In \emph{Proceedings of the British Machine Vision Conference
  (BMVC)}, pages 87.1--87.12. BMVA Press, September 2016.

\end{thebibliography}
\bibliographystyle{plainnat}
}

\newpage
\appendix

\title{
  Towards Practical Control of Singular Values of Convolutional Layers:\\
  Supplementary Materials
}
\maketitle
\vspace{-2em}
\blfootnote{$^*$Equal contribution.}
\blfootnote{$^\dagger$Corresponding author: Alexandra Senderovich (\href{mailto:AlexandraSenderovich@gmail.com}{AlexandraSenderovich@gmail.com})}
\section{Compressed Convolutional Layers}
\label{sec:appB}

\subsection{Proof of Lemma~\ref{lemma:ortho}}
\label{sec:appB1}

The result essentially follows from different ways to represent the TT decomposition~\citep{holtz2012manifolds}.
By applying the QR decomposition to $\tens{K}^{(1)}$ and ${\tens{K}^{(3)\top}}$, we obtain:
\[
    \tens{K}^{(1)} = \tens{Q}^{(1)} R^{(1)}, \quad {\tens{K}^{(3)\top}} = {\tens{Q}^{(3)\top}} R^{(3)},
\]
where $R^{(1)}$, $R^{(3)}$ are upper triangular.
Substituting these formulas into~\eqref{eq:tt} yields~\eqref{eq:ttqr} with
\[
    \tens{Q}^{(2)}_{p_1\dots p_\dim \alpha \beta} =
    \sum_{\alpha' = 0}^{r_1-1}\, \sum_{\beta' = 0}^{r_2-1}   
    R^{(1)}_{\alpha\alpha'}\tens{K}^{(2)}_{ p_1\dots p_\dim \alpha' \beta'} R^{(3)}_{\beta'\beta},
\]
which completes the proof.

\subsection{Proof of Theorem~\ref{thm:main}}
\label{sec:appB2}

Using~\eqref{eq:comp}, we have
\[
    \Tmat_{\tens{K}} = \Tmat_{\tens{K}^{(3)}} \Tmat_{\tens{K}^{(2)}} \Tmat_{\tens{K}^{(1)}}.
\]
Let us first show that given ${{\tens{K}}^{(1)\top}} \tens{K}^{(1)} = I_{r_1}$, the matrix $\Tmat_{\tens{K}^{(1)}}$ has orthonormal rows, i.e., it satisfies 
\begin{equation} \label{eq:ortho_k1}
    \Tmat_{\tens{K}^{(1)}} \Tmat^\top_{\tens{K}^{(1)}} = I_{r_1 n^\dim}.
\end{equation}
To do so, let us find $\Tmat_{\tens{K}^{(1)}}$ in terms of $\tens{K}^{(1)}$.
For any $\tens{X}\in\mathbb{R}^{\cin\times n\times \dots \times n}$ and its row-major reshaping into a matrix $X\in\mathbb{R}^{\cin\times n^d}$, we have
\[
\begin{split}
    \Tmat_{\tens{K}^{(1)}} &\vecs(\tens{X}) \equiv  \vecs \left(\convop_{{\tens{K}}^{(1)}}(\tens{X})\right) \\
    &=  \vecs \left( \tens{K}^{(1)} X \right) 
     = \left(\tens{K}^{(1)} \otimes I_{n^d}\right) \vecs \left( X \right)\\
     & = \left(\tens{K}^{(1)} \otimes I_{n^d}\right) \vecs \left( \tens{X} \right).
\end{split}
\]
Therefore, $\Tmat_{\tens{K}^{(1)}} = \tens{K}^{(1)} \otimes I_{n^d}$, where $\otimes$ denotes the Kronecker product of matrices.
Using basic Kronecker product properties and the orthogonality of $\tens{K}^{(1)}$, we arrive at~\eqref{eq:ortho_k1}.
Analogously, we may obtain $\Tmat_{\tens{K}^{(3)}} = {\tens{K}^{(3)\top}} \otimes I_{f(n,k)^d}$ and
\[
    \Tmat^\top_{\tens{K}^{(3)}} \Tmat_{\tens{K}^{(3)}} = I_{\cout f(n,k)^\dim}.
\]
Finally, using SVD of $\Tmat_{\tens{K}^{(2)}}$: $\Tmat_{\tens{K}^{(2)}} = U\Sigma V^\top$, we get:
\[
\begin{split}
    \Tmat_{\tens{K}} &= 
    \left({\tens{K}^{(3)\top}} \otimes I_{f(n,k)^d}\right)
    U\Sigma V^\top
    \left({\tens{K}^{(1)\top}} \otimes I_{n^d}\right) \\
    &= 
    \left(\left({\tens{K}^{(3)\top}} \otimes I_{f(n,k)^d}\right) U\right)
    \Sigma
    \left(\left({\tens{K}^{(1)}} \otimes I_{n^d}\right)V\right)^\top = 
    \widetilde U \Sigma \widetilde{V}^\top,
\end{split}
\]
where $\widetilde U, \widetilde V$ have orthonormal columns as a product of matrices with orthonormal columns.
Hence, $\Tmat_{\tens{K}} = \widetilde U \Sigma \widetilde{V}^\top$ is in the compact SVD form, which completes the proof.

\section{Periodic Strided Convolutions} 
\label{sec:appA}

\subsection{Code for Computing Singular Values of a Strided Convolution}
\label{sec:codeclip}

According to Theorem~\ref{thm:last_theorem}, the following code computes the singular values of a transform encoded by a strided convolution.
Note that neither the full SVD nor the clipping operation is included in the code.
The code for computing the singular vectors and the new kernel with constrained singular values can be found in the source code repository.

\begin{verbatim}
def SingularValues(kernel, input_shape, stride):
 1  kernel_tr = np.transpose(kernel, axes=[2, 3, 0, 1])
 2  d1 = input_shape[0] - kernel_tr.shape[2]
 3  d2 = input_shape[1] - kernel_tr.shape[3]
 4  kernel_pad = np.pad(kernel_tr, ((0, 0), (0, 0), (0, d1), (0, d2)))
 5  str_shape = input_shape // stride
 6  r1, r2 = kernel_pad.shape[:2]
 7  transforms = np.zeros((r1, r2, stride**2, str_shape[0], str_shape[1]))
 8  for i in range(stride):
 9      for j in range(stride):
10          transforms[:, :, i*stride+j, :, :] = \
11                  kernel_pad[:, :, i::stride, j::stride]
12      transforms = np.fft.fft2(transforms)
13      transforms = transforms.reshape(r1, -1, str_shape[0], str_shape[1])
14      transpose_for_svd = np.transpose(transforms, axes=[2, 3, 0, 1])
15      sing_vals = svd(transpose_for_svd, compute_uv=False).flatten()
16      return sing_vals
\end{verbatim}

\subsection{Proof of Theorem 2}
\label{sec:prooftheorem2}

In this section, we prove Theorem~\ref{thm:last_theorem} from Sec.~\ref{sec:statementtheorem2}.
Firstly, we analyze the structure of the matrix corresponding to a strided convolution.
Secondly, we show that the columns of this matrix can be permuted to make matrix structure similar to that of a  non-strided convolution.
Therefore, the new theorem can be reduced to the already proven theorem.

Let us denote a circulant with each row shifted by the value of stride as a ``strided circulant''.
The shape of such a strided circulant is $\frac{n}{\stride} \times n$, where $n$ is the number of elements in the first row.
Here is an example of a strided circulant with $n = 4$, $\stride=2$:
\[
\begin{pmatrix}
a & b & c & d\\
c & d & a & b
\end{pmatrix}.
\]
One can think of this strided circulant as a block-circulant matrix with block sizes $1 \times \stride$.
At the same time, slicing this matrix by taking columns with a step equal to the stride, e.g., columns $\left (0, \stride, 2 \cdot \stride, \ldots \left( \frac{n}{\stride} - 1 \right) \cdot \stride \right)$, gives a standard circulant matrix.
This fact can be stated as
\[
A = \sum\limits_{i = 0}^{\stride - 1} A_i \left(I_{\frac{n}{s}}\otimes e_i^T \right),
\]
where $\otimes$ denotes the Kronecker product, $A$ is a strided circulant, $e_i \in \mathbb{R}^{\stride}$ is the $i$-th standard basis vector in $\mathbb{R}^{\stride}$, and $A_i$ is a circulant obtained by slicing columns of~$A$.

The regular 2D convolutional transform matrix with a single input and single output channels has a doubly block-circulant structure (see Section 5.5 in \citet{Jain}).
However, for strided convolutions, the structure is different.
For a fixed pair of input and output channels, the output of a strided convolution is a submatrix of an output of a convolution with the same kernel but without the stride.
More specifically, it is a slice with the stride $\stride$ by both dimensions (in Python, it would be written as \texttt{[::$\stride$, ::$\stride$]}).
For the matrix encoding the transformation, it means that only every $\stride$-th block row (simulating the slice by the first dimension) and every $\stride$-th regular row of a block (simulating the slice by the second dimension) are considered.
This means that the doubly block-circulant structure of the initial matrix turns into a doubly block-strided circulant structure of shape $\left(\frac ns\right)^2 \times n^2$, where $n$ is the size of a kernel.

Each block of this matrix is a strided circulant, and the block structure is that of a strided circulant as well.
The matrix consists of $\frac{n}{\stride} \times n$ blocks, and each of them has the shape $\frac{n}{\stride} \times n$.
If we denote a strided circulant of a row vector $a$ as $\mathrm{circ}_{\stride}(a),$ then the matrix of the transform is as follows:
\[
B = \begin{pmatrix}
\mathrm{circ}_{\stride}({\tens{K}}_{0,:})&\dots&\mathrm{circ}_{\stride}({\tens{K}}_{n{\shortminus}1,:})\\
\mathrm{circ}_{\stride}({\tens{K}}_{n{\shortminus}s,:})&\dots&\mathrm{circ}_{\stride}({\tens{K}}_{n{\shortminus}s{\shortminus}1,:})\\
\vdots&\ddots&\vdots\\
\mathrm{circ}_{\stride}({\tens{K}}_{s,:})&\dots&\mathrm{circ}_{\stride}({\tens{K}}_{s{\shortminus}1,:})\\
\end{pmatrix}.
\]
It can be noted that, in the same way as in strided circulants, we can take block columns of the matrix with a step equal to the stride and get block circulants (where each block is a strided circulant).
This block structure can be described as follows:
\[
B = \sum\limits_{i = 0}^{\stride - 1} B_i \left( \left( I_{\frac{n}{s}}\otimes e_i^T \right) \otimes I_{n} \right).
\]
Here $B$ is a doubly block-strided circulant matrix, and $B_i \in \mathbb{R}^{(\frac{n}{s})^2 \times \frac{n^2}{s}}$ is a block-circulant matrix.
This formula is similar to our previous sum representation for a strided circulant; however, the dimensions of the right term of each element are larger to account for the block structure of the left term.
Note that this term is needed to describe how the columns of $B_i$ are positioned in the matrix $B$, similar to the 1D case.

Let us consider $B_i$.
It consists of block columns $\left( i, s + i, 2s + i, \ldots \left( \frac{n}{s} - 1 \right) s + i \right)$:
\[
B_i = \begin{pmatrix}
\mathrm{circ}_{\stride}({\tens{K}}_{i,:})&\dots&\mathrm{circ}_{\stride}({\tens{K}}_{n{\shortminus}s + i,:})\\
\mathrm{circ}_{\stride}({\tens{K}}_{n{\shortminus}s + i,:})&\dots&\mathrm{circ}_{\stride}({\tens{K}}_{n{\shortminus}2s+i,:})\\
\vdots&\ddots&\vdots\\
\mathrm{circ}_{\stride}({\tens{K}}_{s + i,:})&\dots&\mathrm{circ}_{\stride}({\tens{K}}_{i,:})\\
\end{pmatrix}
\]
As a block-circulant with $\frac{n}{s} \times \frac{n}{s}$ blocks of $\frac{n}{s} \times n$, it can be expanded as follows:
\[
B_i = \sum\limits_{k = 0}^{\frac{n}{s} - 1} P^{k} \otimes C_{ik},
\]
where $C_{ik} \in \mathbb{R}^{\frac{n}{s} \times n}$ is a strided circulant block, $C_{ik} = \mathrm{circ}_{\stride} \left( {\tens{K}}_{i - sk, :} \right)$ and $P$ is a permutation matrix:
\[
P = \begin{pmatrix}
  0 & \ldots & 0 & 1\\
  1 & \ldots & 0 & 0\\
  \vdots & \ddots & \vdots & \vdots\\
  0 & \ldots & 1 & 0
\end{pmatrix} \in \mathbb{R}^{\frac{n}{s} \times \frac{n}{s}}.
\]
We can expand $C_{ik}$ as a strided circulant:
\[
C_{ik} = \sum\limits_{j = 0}^{s - 1}A_{ikj} \left( I_{\frac{n}{s}} \otimes e_{j}^T \right),
\]
where $A_{ikj} \in \mathbb{R}^{\frac{n}{s} \times \frac{n}{s}}$ is a regular circulant matrix that can be acquired as $\mathrm{circ}_1({\tens{K}}_{i - sk, j::s}),$ i.e. as a circulant built from the slice of a string ${\tens{K}}_{i - sk},$ taken with a step $s$ starting from the index $j$.
Finally,
\[
\begin{split}
B_i &= \sum\limits_{k = 0}^{\frac{n}{s} - 1} P^{k} \otimes C_{ik} = \sum\limits_{k = 0}^{\frac{n}{s} - 1} P^{k} \otimes \left(\sum\limits_{j = 0}^{s - 1}A_{ikj} \left( I_{\frac{n}{s}} \otimes e_{j}^T \right) \right) = \sum\limits_{k = 0}^{\frac{n}{s} - 1} \sum\limits_{j = 0}^{s - 1} P^{k} \otimes \left( A_{ikj} \left( I_{\frac{n}{s}} \otimes e_{j}^T \right) \right)\\
&= \sum\limits_{k = 0}^{\frac{n}{s} - 1} \sum\limits_{j = 0}^{s - 1} P^{k} \otimes \left( A_{ikj} \otimes e_{j}^T \right) = \sum\limits_{k = 0}^{\frac{n}{s} - 1} \sum\limits_{j = 0}^{s - 1} \left( \left( P^{k} \otimes A_{ikj} \right) \otimes e_{j}^T \right) = \sum\limits_{j = 0}^{s - 1}\left(\sum\limits_{k =0}^{\frac{n}{s} - 1} P^{k} \otimes A_{ikj}\right)\otimes e_{j}^T\\
&= \sum\limits_{j = 0}^{s - 1}\left(\sum\limits_{k =0}^{\frac{n}{s} - 1} P^{k} \otimes A_{ikj}\right) \left( I_{(\frac{n}{s})^2}\otimes e_{j}^T \right).
\end{split}
\]
This reformulation helps us see that the slices of $B_i$ by columns with step $s$ are, in fact, doubly block-circulant matrices defined by spatial slices of the kernel.
There are $s$ matrices $B_i$, each containing $s$ column slices.
The $j$th column slice of $B_i$ is defined by  $A_{i,:,j}$, which, in turn, is defined by the kernel slice ${\tens{K}}_{i::s, j::s}.$

In order to reduce the task of computing singular values of the matrix $B$, corresponding to the strided convolution, to the simpler task of computing singular values of a matrix corresponding to a regular convolution, let us permute the columns of $B_i$:
\[
\begin{split}
B'_i = \sum\limits_{j = 0}^{s - 1} \left[ e_{j}^T \otimes \left(\sum\limits_{k =0}^{\frac{n}{s} - 1} P^{k} \otimes A_{ikj}\right)\right].
\end{split}
\]
This matrix consists of $\stride$ consecutive doubly block-circulant matrices $\operatorname{circ}^2\left({\tens{K}}_{i::s, j::s}\right)$.
The first dimension of $B_i$ is the same as the first dimension of $B$.
Therefore, for $B$, this is also just a permutation of the columns.
The next step is to permute the block columns of $B$:
\[
\begin{split}
B' = \sum\limits_{i = 0}^{s - 1} \left(e_i^T \otimes \sum\limits_{j = 0}^{s - 1} \left[ e_{j}^T \otimes \left(\sum\limits_{k =0}^{\frac{n}{s} - 1} P^{k} \otimes A_{ikj}\right)\right]\right).
\end{split}
\]
After this permutation, matrix $B$ consists of $\stride^2$ consecutive doubly block-circulant matrices.
Note that the particular order of these blocks is not important, as it is just a matter of column order.

Finally, let us look at the matrix associated with the multiple-channel convolution.
The matrix of the convolutional transform with $\cin$ input channels and $\cout$ output channels is as follows (equivalent to the structure described in~\citet{sedghi2019singular}):
\[M = \begin{pmatrix}
B_{0,0}&B_{0,1}&\dots&B_{0,(\cout{\shortminus}1)}\\
B_{1,0}&B_{1,1}&\dots&B_{1,(\cout{\shortminus}1)}\\
\vdots&\vdots&\ddots&\vdots\\
B_{(\cin{\shortminus}1),0}&B_{(\cin{\shortminus}1),1}&\dots&B_{(\cin{\shortminus}1),(\cin{\shortminus}1)}\\
\end{pmatrix},
\]
where
\[
B_{c,d} = \mathrm{circ}_{\stride}(\tens{K}_{:,:,c,d}).
\]
However, now we know that we can permute the columns of this matrix to get a matrix comprised of doubly block-circulant blocks.
After the permutation described above, we get a matrix
\[M' = \begin{pmatrix}
B'_{0,0}&B'_{0,1}&\dots&B'_{0,(\stride^2\cout{\shortminus}1)}\\
B'_{1,0}&B'_{1,1}&\dots&B'_{1,(\stride^2\cout{\shortminus}1)}\\
\vdots&\vdots&\ddots&\vdots\\
B'_{(\cin{\shortminus}1),0}&B'_{(\cin{\shortminus}1),1}&\dots&B'_{(\cin-1),(\stride^2\cout{\shortminus}1)}\\
\end{pmatrix},
\]
where
\[
B'_{c,d} = \operatorname{circ}^2\left({\tens{K}}_{i::s, j::s, c, \lfloor d / \stride^2\rfloor}\right),
\]
$i, j$ -- some integer indices from 0 to $\stride - 1$.
The exact relationship between $i, j$ and $c, d$ is not important, as it depends only on the order of the columns.
The only important thing is that it has to be the same in all the rows.
We choose the following functional form:
\[i = \left \lfloor \left(d\hspace{-0.5em}\mod\stride^2\right)/ s \right \rfloor,\ \ j = d\hspace{-0.5em}\mod\stride.
\]
This matrix $M'$ can be perceived as the matrix of convolution for a new kernel $K' \in \mathbb{R}^{\frac{n}{s} \times \frac{n}{s} \times \cin \times \cout\stride^2},$ defined by this equation:
\[
K_{a, b, c, d} = \widehat{\tens{K}}_{ \lfloor \left(d\hspace{-0.5em}\mod\stride^2\right)/ s\rfloor + a\stride, d\hspace{-0.5em}\mod\stride + b\stride, c, d\hspace{-0.5em}\mod\stride}.
\]
Alternatively, to make things simpler, we can use the additional tensor $R$, defined in~\eqref{eqn:reshape_for_fft}.
It is easier to use this tensor for implementing the formula in Python.

To conclude, we reduced the task of computing the singular values of $M$ to the task of computing the singular values of the $M'$, solved by~\citet{sedghi2019singular}.
The reduction is made possible via columns permutation, meaning that the singular values of the matrix did not change.

\section{Additional Empirical Studies}

\begin{table*}[t!]
\caption{
Various metrics for the proposed framework applied to the SOC method (SOC-TT) and the LipConvNet-$N$ architectures. $\lambda_\mathrm{ort}$ denotes the regularization parameter of the orthogonal loss~\eqref{eq:orthloss}. We chose a range of lambda values to allow us to keep the Lipschitz constant under $1$. 
The rank is set to $256$.
}
\label{tab:lc_ol}
\begin{center}
\begin{small}
\begin{sc}
\begin{tabular}{cc|ccccc}
\toprule
\multicolumn{2}{c|}{Parameters} & \multicolumn{5}{c}{Metrics}\\
-$N$ & $\lambda_\mathrm{ort}$ & Acc.~$\uparrow$ & CIFAR-C~$\uparrow$ & ECE~$\downarrow$ & AA~$\uparrow$ & Lip.~$\downarrow$ \\ 
\midrule
\multirow{6}{*}{5}
&5e3 &\textbf{78.37} &\textbf{69.84} &\textbf{5.42} &\textbf{34.65} &\textbf{0.93} \\
&8e3 &77.68 &69.57 &6.57 &33.73 &0.79 \\
&\cellcolor{white}1e4 &\cellcolor{white}77.17 &\cellcolor{white}68.99 &\cellcolor{white}6.61 &\cellcolor{white}32.13 &\cellcolor{white}0.76 \\
&3e4 &76.25 &67.9 &8.35 &30.96 &0.6 \\
&\cellcolor{white}4e4 &\cellcolor{white}75.8 &\cellcolor{white}67.67 &\cellcolor{white}7.96 &\cellcolor{white}30.69 &\cellcolor{white}0.59 \\
&5e4 &75.74 &67.66 &8.71 &30.77 &0.58 \\
\midrule
\multirow{6}{*}{20}&7e4 &\textbf{78.41} &\textbf{70.47} &\textbf{4.89} &\textbf{35.89} &\textbf{0.91} \\
&\cellcolor{white}8e4 &\cellcolor{white}78.34 &\cellcolor{white}70.27 &\cellcolor{white}5.18 &\cellcolor{white}35 &\cellcolor{white}0.89 \\
&1e5 &77.45 &69.81 &5.43 &34.1 &0.8 \\
&\cellcolor{white}2e5 &\cellcolor{white}76.56 &\cellcolor{white}68.4 &\cellcolor{white}6.62 &\cellcolor{white}32.19 &\cellcolor{white}0.64 \\
&3e5 &76.03 &68 &7.23 &32.52 &0.58 \\
&\cellcolor{white}4e5 &\cellcolor{white}75.66 &\cellcolor{white}67.56 &\cellcolor{white}7.08 &\cellcolor{white}31.54 &\cellcolor{white}0.57 \\
\midrule
\multirow{6}{*}{30}&2e5 &\textbf{77.82} &\textbf{69.94} &\textbf{4.78} &\textbf{34.91} &\textbf{0.86} \\
&3e5 &77.86 &69.38 &5.86 &33.96 &0.75 \\
&\cellcolor{white}5e5 &\cellcolor{white}76.57 &\cellcolor{white}68.23 &\cellcolor{white}6.79 &\cellcolor{white}32.73 &\cellcolor{white}0.65 \\
&7e5 &75.93 &68.03 &6.93 &32.65 &0.59 \\
&\cellcolor{white}9e5 &\cellcolor{white}75.88 &\cellcolor{white}67.71 &\cellcolor{white}6.81 &\cellcolor{white}31.49 &\cellcolor{white}0.59 \\
&1.2e6 &76.22 &67.6 &7.23 &31.85 &0.56 \\
\bottomrule
\end{tabular}
\end{sc}
\end{small}
\end{center}
\end{table*}

\subsection{Plotting Empirical Lipschitz Constant}
\label{sec:emlip}

In order to estimate the Lipschitz constants of LipConvNet networks and their TT-compressed counterparts, we plot histograms of empirical Lipschitz constants inspired by~\citet{Sanyal2020StableRN}.
The Lipschitz constants are evaluated by attacking each image from the test set using the FGSM attack of a fixed radius (0.5 in our experiments)~\citep{Goodfellow2015ExplainingAH}.
We compute a ratio that is upper-bounded by the true Lipschitz constant $L$:
\begin{equation} \label{eq:app_estlip}
L_{\mathrm{estimated}}(\mathcal{X}) \equiv \frac{\left \| f(\tens{X}) - f(\tens{X}_{\mathrm{attacked}}) \right \|_2}{\left \| \tens{X} - \tens{X}_{\mathrm{attacked}} \right \|_2} \leqslant L
\end{equation}
for each image and plot the results as a histogram. 
Here, $f(\tens{X})$ is the output of a model $f$ for an input image $\tens{X}$; $\tens{X}_{\mathrm{attacked}}$ is $\tens{X}$ perturbed, as described in the first paragraph.
Then, by computing $L_{\mathrm{estimated}}(\mathcal{X})$ for different $\mathcal{X}$ from the dataset, we plot the histograms (Fig.~\ref{fig:lip_ol}). 
These histograms give us insights into the true Lipschitz constant of the network.

If all network layers are $1$-Lipschitz, then we expect each $L_{\mathrm{estimated}}(\mathcal{X})$ on the histogram to be strictly bounded by~$1$.
However, for the LipConvNet architectures, the bound on the Lipschitz constant might not be exact in practice due to truncating Taylor expansion.
In our case, the frame matrices $\tens{K}^{(1)}$ and $\tens{K}^{(3)}$ are trained with the regularization loss, so they are not precisely orthogonal either.
By increasing the coefficient $\lambda_\mathrm{ort}$ of this loss component, though, we can force the matrices to become orthogonal by the end of the training process.

To support this claim, we provide the distributions of the empirical Lipschitz constants for several LipConvNet architectures trained with different $\lambda_\mathrm{ort}$.
The trained models are presented in Tab.~\ref{tab:lc_ol}.
The last metric, ``Lip.'', or Maximum Empirical Lipschitz Constant, is a lower bound on the real Lipschitz constant of the corresponding model.
It is obtained as the maximum value of all calculated empirical Lipschitz constants.
As we can see, by increasing $\lambda_\mathrm{ort}$, we can balance between the $1$-lipschitzness and the quality of metrics: the higher $\lambda_\mathrm{ort}$, the lower the metrics are and the more constrained the Lipschitz constant is.
Numbers selected with bold correspond to models that were selected as best for the corresponding LipConvNet-$N$ architectures: they demonstrate high performance in terms of accuracy and robust metrics while at the same time maintaining the Maximum Empirical Lipschitz Constant that does not exceed 1.
We also note that LipConvNet architectures do not converge if the convolutional layers are too far from being 1-Lipschitz.
For example, this effect was observed on LipConvNet-5 with $\lambda_\mathrm{ort} =$ $4e3$, where the $\lambda_\mathrm{ort}$ turns out to be not big enough.

\begin{figure}
  \centering
  \subfigure[]{\includegraphics[width=0.45\textwidth]{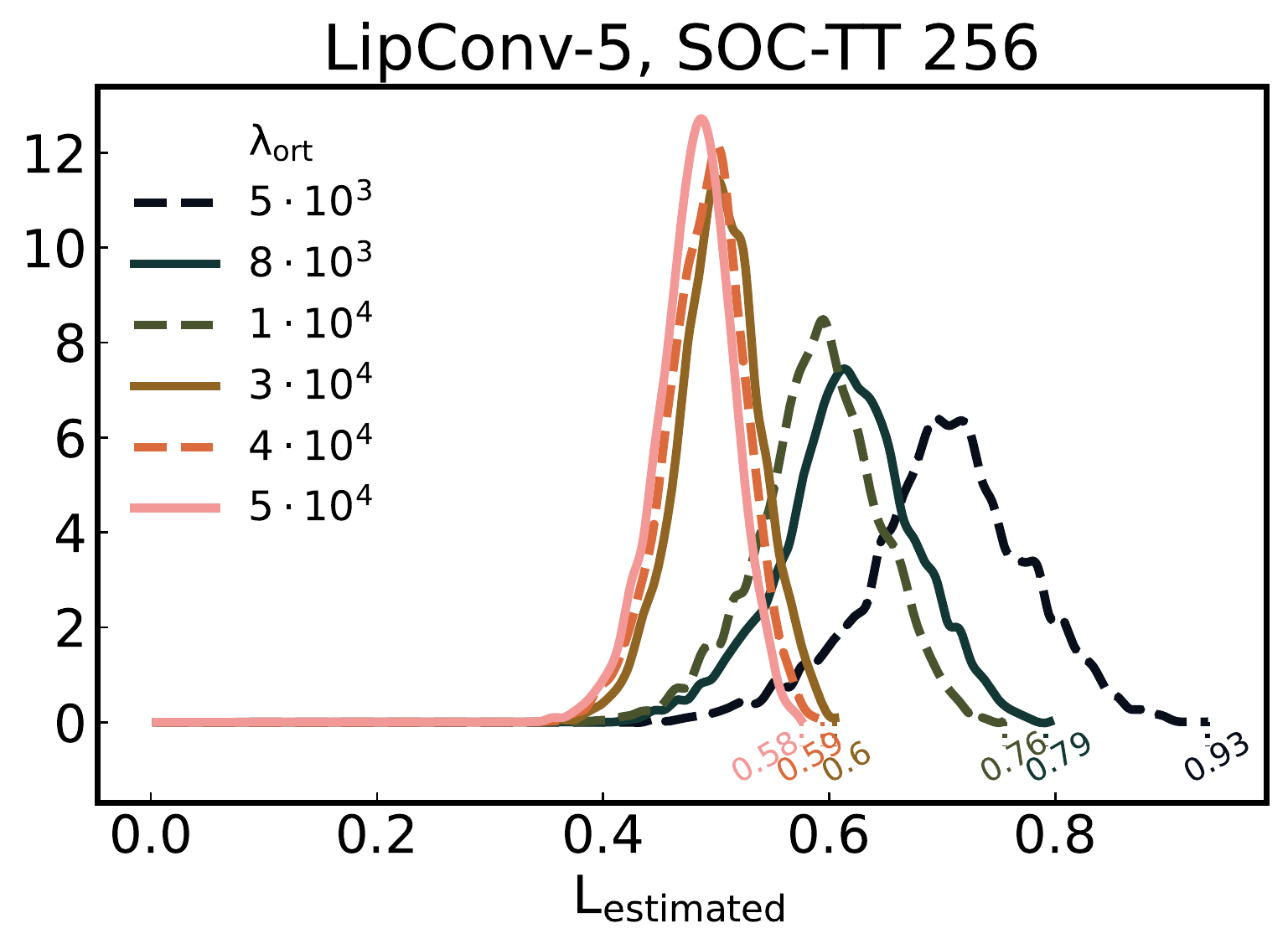}}
  \subfigure[]{\includegraphics[width=0.45\textwidth]{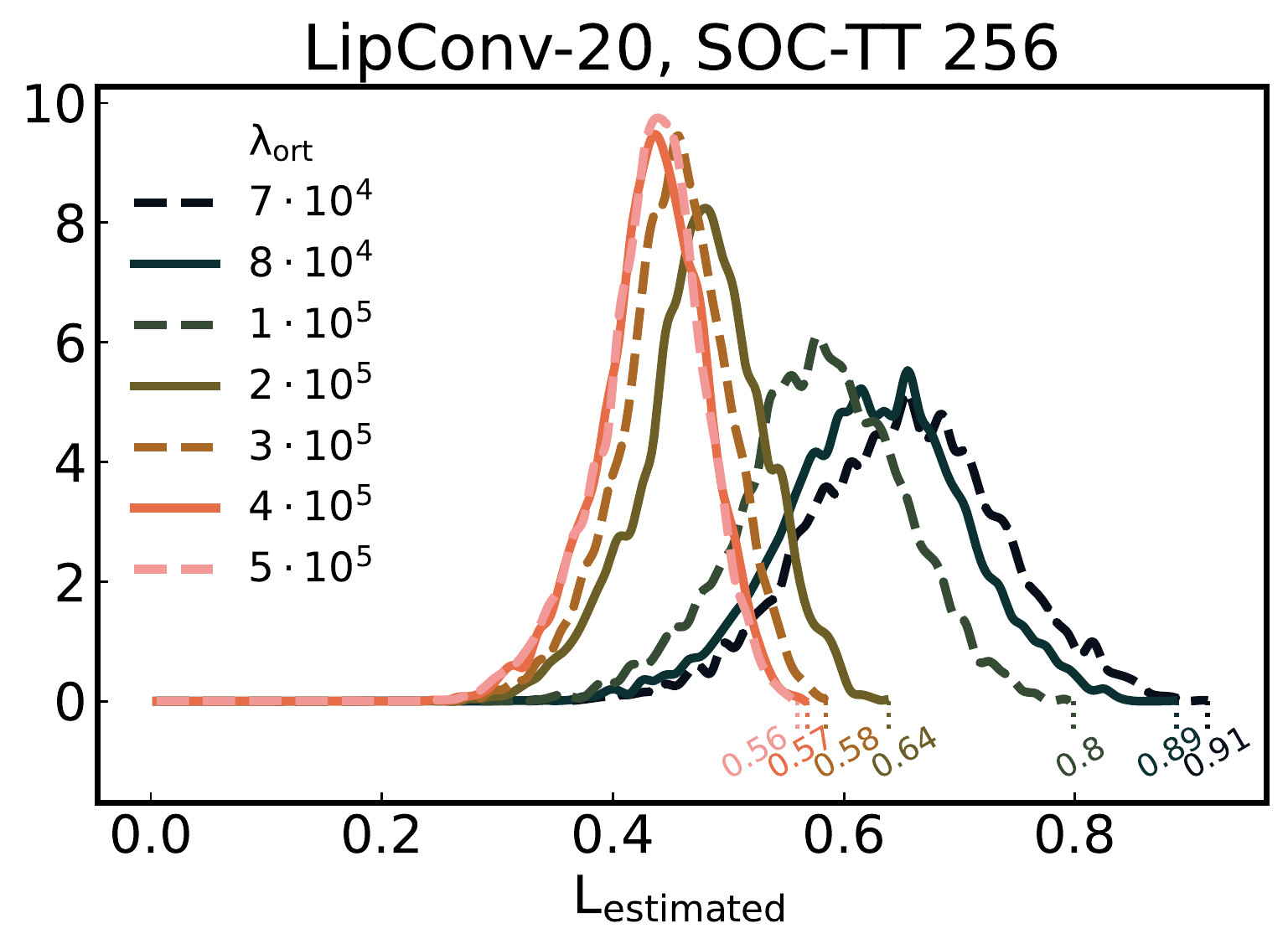}}
  \subfigure[]{\includegraphics[width=0.45\textwidth]{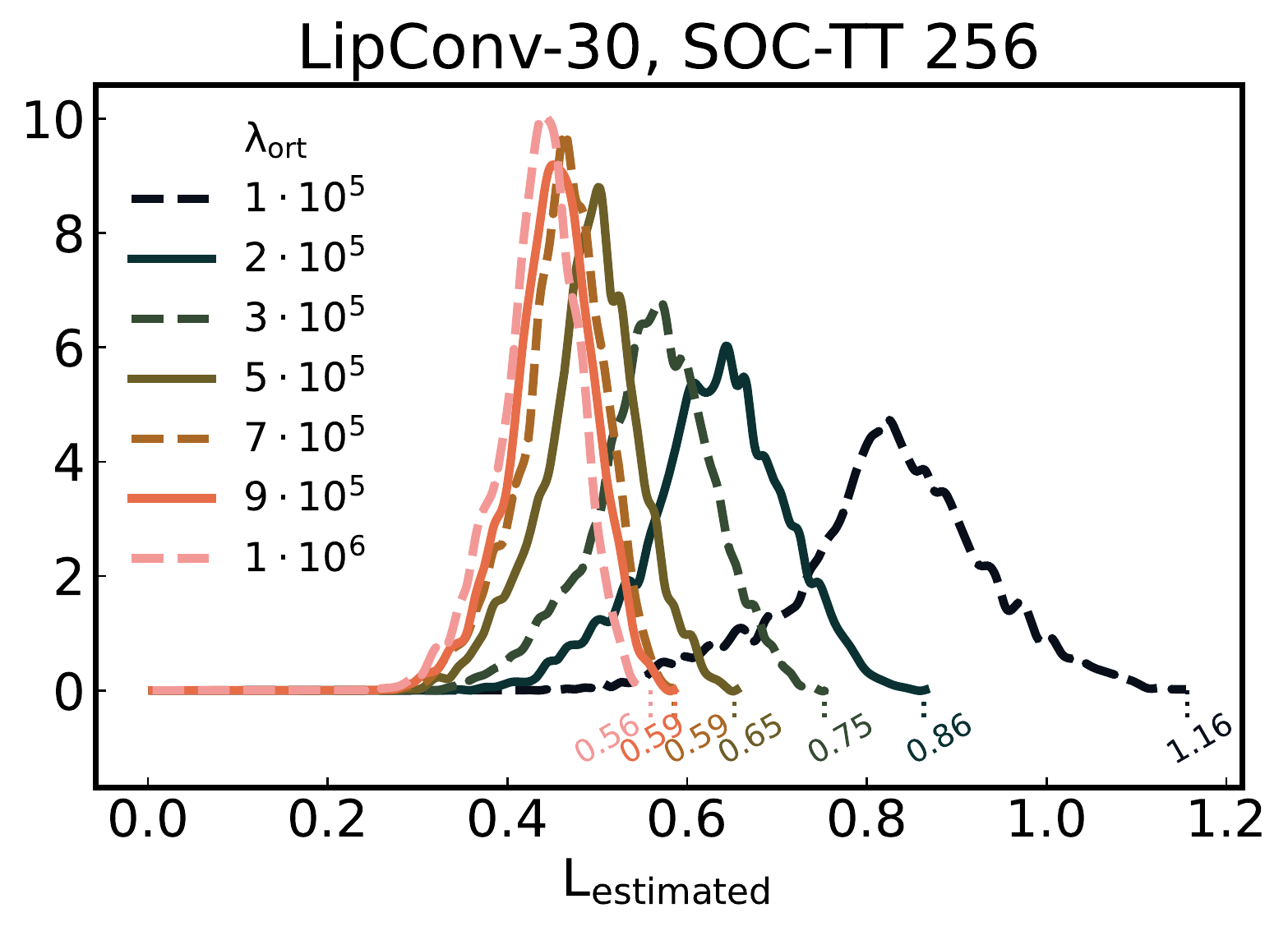}}
  \vspace{-0.1cm}
  \caption{Histograms of empirical constants~\eqref{eq:app_estlip} of networks for different regularization parameters~$\lambda_\mathrm{ort}$ and  LipConvNet-$N$ architectures.}
  \label{fig:lip_ol}
\end{figure}

Fig.~\ref{fig:lip_ol} demonstrates the shifts in distribution depending on $\lambda_\mathrm{ort}$.
We can observe ranges of empirical Lipschitz constants on the $x$-axis.
Maximum values for each model, denoted with dotted vertical lines, are the discussed Maximum Empirical Lipschitz Constants in Tab.~\ref{tab:lc_ol}.

\subsection{Inference Time of a SOC-TT Layer}

\begin{wrapfigure}{R}{0.5\textwidth}
 \vspace{-1em}
	\centering
	\includegraphics[width=0.4\textwidth]{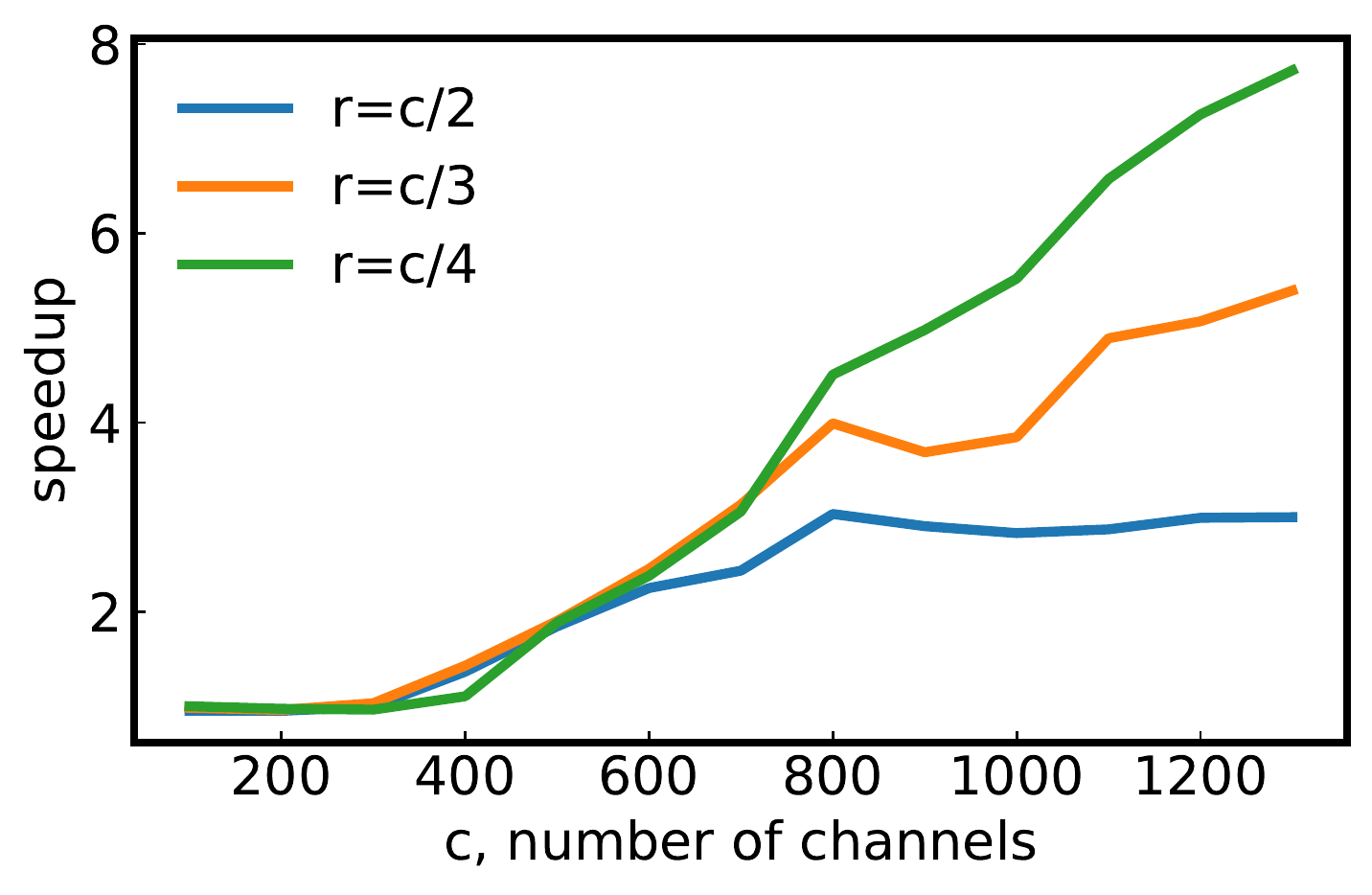}
	\caption{Speedups (w.r.t. uncompressed layer) of the application of a SOC-TT layer, $n=16$.}
	\label{fig:soc_speedup}
 \vspace{-1em}
\end{wrapfigure}

In this section, we present the inference time of our framework when applied to the SOC method.
In particular, we consider the application of a single SOC layer with various numbers of channels.
Fig.~\ref{fig:soc_speedup} illustrates speedups when a single SOC layer is accelerated using the proposed method with different rank values. 
The figure suggests that for larger residual networks that contain layers with a number of channels up to $1000$, the speedup can be up to $\approx6$ times for $c/4$ and up to $\approx3$ times for $c/2$.
However, for networks where the number of channels is less than $500$, the speedup is less than $2$.

\subsection{WideResNet16-10 on CIFAR-100}
\label{sec:appC_cifar100}
\begin{table*}[t!]
\caption{
Performance metrics for different constraints applied to WideResNet-16-10 trained on CIFAR-100.
Clipping to 1 increases the baseline performance only without TT decomposition, while the other methods (clipping to 2 and division) yield an increase in some of the metrics.
``Speedup'' is the speedup of an overhead resulting from singular values control in all the layers in the network.
Clipping the whole network w/o decomposition takes 6.2 min, while the application of division takes 0.6 sec.
``Comp.'' (compression) is the ratio between the number of parameters of convolutional layers in the original ($\sim16.8$M) and decomposed networks.
}
\label{tab:wrn16_cifar100}
\vskip 0.15in
\begin{center}
\begin{small}
\begin{sc}
\makebox[\textwidth][c]{
\begin{tabularx}{1.\textwidth}{cc|>{\centering\arraybackslash}Xc>{\centering\arraybackslash}X>{\centering\arraybackslash}Xcc}
\toprule
Method & Rank & Acc.~$\uparrow$ & AA~$\uparrow$ & CC~$\uparrow$& ECE~$\downarrow$ & \multirow{1}{*}{Clip (s)~$\downarrow$} & \multirow{1}{*}{Comp.~$\uparrow$}\\ 
\midrule
\multirow{4}{*}{Baseline} & -- & 77.97 & \textbf{27.24} & \textbf{48.95} & \textbf{8.27} & -- & 1.0\\
& 192 & 78.53 & 25.5 & 47.28 & 11.78 & -- & 3.6\\
& 256 & \textbf{78.55} & 26.11 & 47.72 & 11.76 & -- & 2.4\\
& 320 & 76.51 & 24.81 & 46.18 & 12.5 & -- & 1.8\\
\midrule
\multirow{4}{*}{Clip to 1} & -- & \textbf{78.99} & \textbf{27.84} & \textbf{47.89} & \textbf{10.09} & 1.0 & 1.0\\
& 192 & 77.7 & 25.02 & 46.77 & 11.9 & 4.1 & 3.6\\
& 256 & 77.71 & 27.15 & 46.24 & 11.88 & 3.3 & 2.4\\
& 320 & 77.79 & 26.34 & 47.51 & 11.75 & 2.3 & 1.8\\
\midrule
\multirow{4}{*}{Clip to 2} & -- & \textbf{79.92} & \textbf{28.15} & \textbf{47.73} & \textbf{9.4} & 1.0 & 1.0\\
& 192 & 78.74 & 27.21 & 46.08 & 11.41 & 4.1 & 3.6\\
& 256 & 79.39 & 27.54 & 47.26 & 11.07 & 3.3 & 2.4\\
& 320 & 79.52 & 26.82 & 47.68 & 10.43 & 2.3 & 1.8\\
\midrule
\multirow{4}{*}{Division} & -- & 78.59 & \textbf{26.85} & \textbf{48.38} & 10.97 & 1.0 & 1.0 \\
& 192 & 78.65 & 25.4 & 47.86 & 8.29 & 4.1 & 3.6 \\
& 256 & \textbf{78.98} & 26.26 & 47.77 & \textbf{8.21} & 3.3 & 2.4 \\
& 320 & 77.58 & 25.69 & 46.28 & 12.18 & 2.3 & 1.8 \\
\bottomrule
\end{tabularx}
}
\end{sc}
\end{small}
\end{center}
\vskip -0.1in
\end{table*}

In addition to evaluating our framework on CIFAR-10, we conducted experiments on another classic vision dataset, CIFAR-100.
We trained WideResNet-16-10 on CIFAR-100 with the same experimental setup as in \citet{gouk2020regularisation} and compared clipping and division as in the main body of the paper.
The results are presented in Tab.~\ref{tab:wrn16_cifar100}.
Compared with CIFAR-10 experiments from~\ref{tab:wrn16}, we observe that both clipping and division do not give gain on CIFAR-C (CC).
The other metrics tend to improve with the control of singular values.
Except for clipping-$1$, the TT-compressed versions with singular value control outperform the baseline accuracy in most cases.
The AA metric of the baseline is outperformed when using clipping to $2$ and $r=256$.
\end{document}